\documentclass{article}

\usepackage[preprint]{neurips_2026}

\usepackage[utf8]{inputenc} 
\usepackage[T1]{fontenc}    
\usepackage{hyperref}       
\usepackage{url}            
\usepackage{booktabs}       
\usepackage{amsfonts}       
\usepackage{nicefrac}       
\usepackage{microtype}      
\usepackage{xcolor}         

\usepackage{amsmath}
\usepackage{amssymb}
\usepackage{mathtools}
\usepackage{amsthm}
\usepackage{enumitem}
\usepackage{graphicx}    
\usepackage{subfig}      
\usepackage[table]{xcolor}
\usepackage{algorithm}
\usepackage{algorithmic}

\usepackage[capitalize,noabbrev]{cleveref}
\usepackage{tcolorbox}
\usepackage{fontawesome5}
\theoremstyle{plain}
\newtheorem{theorem}{Theorem}[section]

\newtheorem{lemma}[theorem]{Lemma}
\newtheorem{corollary}[theorem]{Corollary}
\theoremstyle{definition}
\newtheorem{definition}[theorem]{Definition}
\newtheorem{assumption}[theorem]{Assumption}
\theoremstyle{remark}
\newtheorem{remark}[theorem]{Remark}

\title{Heterogeneous Agent Collaborative Reinforcement Learning}

\author{%
  Zhixia Zhang\textsuperscript{1}\thanks{Equal contribution.} \quad
  Zixuan Huang\textsuperscript{1,2}\footnotemark[1] \quad
  Gongxun Li\textsuperscript{1} \quad
  Huaiyang Wang\textsuperscript{1} \quad
  Chengyi Yuan\textsuperscript{5} \quad \\ 
  Xin Xia\textsuperscript{2} \quad
  Deqing Wang\textsuperscript{1} \quad
  Fuzhen Zhuang\textsuperscript{1} \quad
  Shuai Ma\textsuperscript{1} \quad 
  Ning Ding\textsuperscript{3} \quad
  Yaodong Yang\textsuperscript{4} \quad \\
  Jianxin Li\textsuperscript{1} \quad
  Yikun Ban\textsuperscript{1}\thanks{Corresponding author: \texttt{yikunb@buaa.edu.cn}} \\
  \\
  \fontfamily{ptm}\selectfont
  \textsuperscript{1}\textbf{\textit{Beihang University}} \quad
  \textsuperscript{2}\textbf{\textit{Bytedance China}} \quad
  \textsuperscript{3}\textbf{\textit{Tsinghua University}} \quad
  \textsuperscript{4}\textbf{\textit{Peking University}} \quad
  \textsuperscript{5}\textbf{\textit{Apple}} \\
  \\
  \href{https://zzx-peter.github.io/hacrl/}{\faGithub\ \texttt{Github Page: https://zzx-peter.github.io/hacrl/}}
}

\begin{document}

\maketitle

\begin{abstract}
We introduce \textbf{H}eterogeneous \textbf{A}gent \textbf{C}ollaborative \textbf{R}einforcement \textbf{L}earning (\textbf{HACRL}), a new Reinforcement Learning from Verifiable Reward (RLVR) problem that addresses the inefficiencies of isolated multi-agent on-policy optimization.
HACRL enables collaborative optimization with independent execution: heterogeneous agents share verified rollouts during training to mutually improve, while operating independently at inference time.
Unlike LLM-based multi-agent reinforcement learning (MARL), HACRL does not require coordinated deployment, and unlike on-/off-policy distillation, it enables bidirectional \emph{mutual learning} among \emph{heterogeneous agents} rather than one-directional homogeneous teacher-to-student transfer.
Building on this problem, we propose \textbf{HACPO}, a collaborative RL algorithm that enables principled rollout sharing to maximize sample utilization and cross-agent knowledge transfer.
To mitigate capability discrepancies and policy distribution shifts, HACPO introduces four tailored mechanisms with theoretical guarantees on unbiased advantage estimation.
Extensive experiments across diverse heterogeneous model combinations and reasoning benchmarks show that HACPO consistently improves all participating agents, outperforming GSPO with double rollouts by an average of 3.6\% while using only half the rollout cost.
\end{abstract}
\begin{figure} [t]
    \vskip -0.1in
    \centering
    \includegraphics[width=0.95\textwidth]{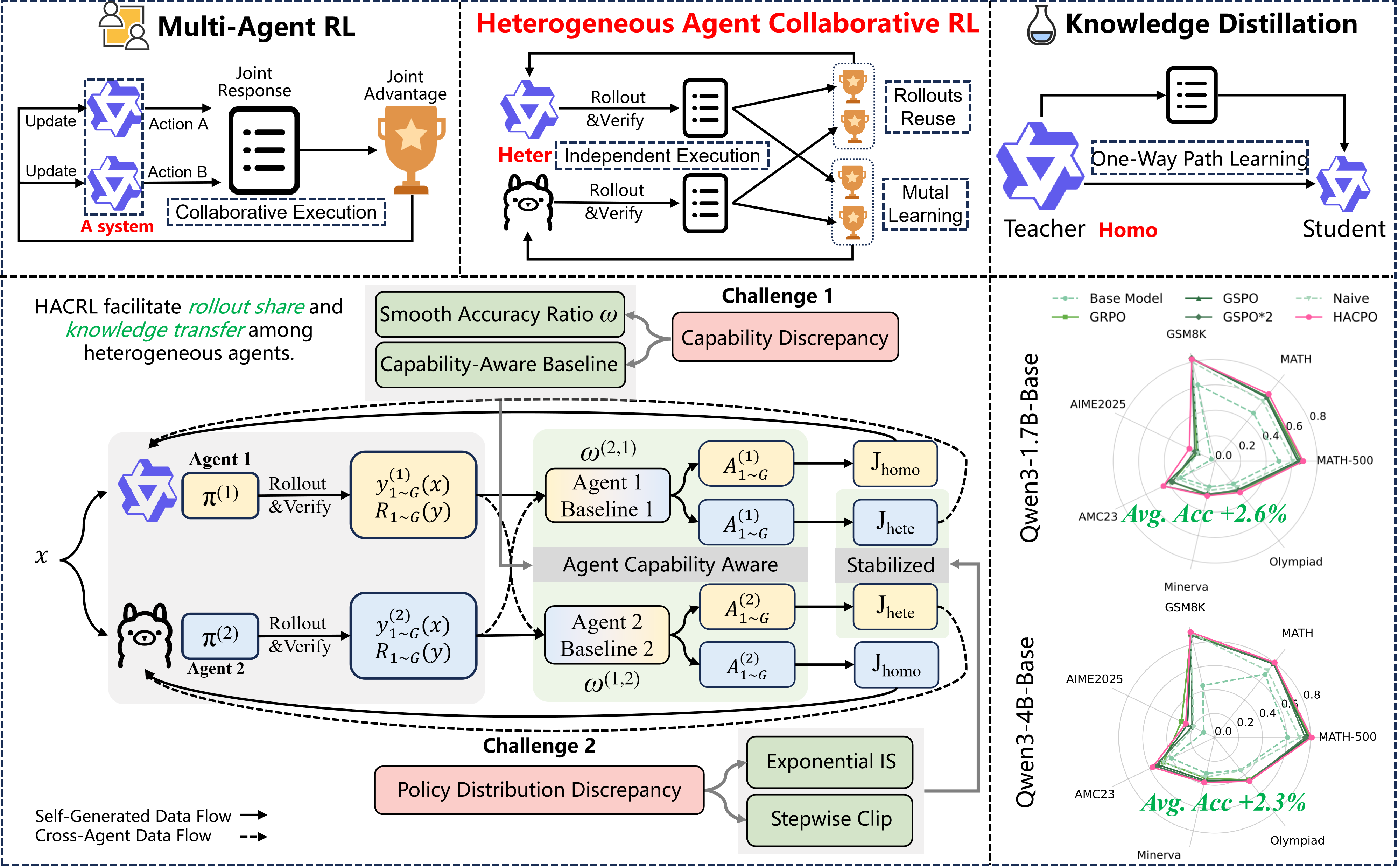}
    \caption{In HACPO, shared rollouts from multiple heterogeneous agents are leveraged for collaborative training. Built upon vanilla RL Optimization, HACPO introduces four algorithmic innovations to mitigate capability and policy distribution discrepancy.}
    \vspace{-0.5cm}
    \label{fig:HACRL}
\end{figure}
\section{Introduction}
Reinforcement Learning with Verifiable Rewards (RLVR) has emerged as a highly effective paradigm for training strong reasoning models via automatically checkable reward signals (e.g., unit tests and formal verifiers) \cite{yang2026grouprelativeadvantagebiased}. Compared with SFT \cite{SFT} and DPO \cite{DPO}, RL \cite{RLHF} more directly aligns the model with downstream objectives, and RLVR further strengthens this alignment through verifiability.
Within RLVR, group-based policy optimization algorithms such as GRPO \cite{shao2024deepseekmath} replace the critic in PPO \cite{ppo} by computing group-relative advantages, motivating variants including DAPO \cite{yu2025dapo} and GSPO \cite{GSPO}.
Despite these advances, RLVR remains bottlenecked by expensive on-policy sampling and verification, which frequently dominate the overall training overhead and limit scalability.
Meanwhile, modern LLM ecosystems are inherently \emph{heterogeneous}: agents differ in parameter states, model size, architecture and  different downstream tasks, such as instruction following \cite{SFT}, mathematical problem solving \cite{gsm8k}, and code generation \cite{code}.
This heterogeneity becomes even more pronounced when models come from different vendors or families \cite{yang2025qwen3,llama}, with mismatched pretraining corpora, tokenizers, and architectural choices.

Typically, given \emph{one} identical task, \emph{multiple} agents execute RLVR optimization \emph{independently} of one another.
For essentially the same objective, they repeatedly generate trajectories and yield verifiable rewards, while these costly intermediate results are only utilized for self-training.

To break through this wasteful practice, we propose a collaborative policy optimization problem for RLVR: \emph{given a set of heterogeneous agents, can an agent improve both effectiveness and efficiency by leveraging rollouts generated by other agents, rather than relying solely on its own on-policy rollouts?}
Our goal is to enable \emph{mutual benefit} across agents---each agent can reuse rollouts from others---while controlling distribution shift induced by heterogeneity.

We first formalize this setting as \textbf{H}eterogeneous \textbf{A}gent \textbf{C}ollaborative \textbf{R}einforcement \textbf{L}earning (\textbf{HACRL}), which captures collaborative policy optimization among heterogeneous agents that execute independently at inference time.
HACRL differs fundamentally from existing paradigms as illustrated in Figure~\ref{fig:HACRL}:
(1) \textbf{LLM-based Multi-Agent Reinforcement Learning (MARL).} \cite{MARFT}
MARL trains agents to coordinate and jointly solve tasks through interaction within a coupled multi-agent system.
In contrast, HACRL does not require coordinated execution. In many practical scenarios, only a single agent is deployed at inference time; however, we still desire that this agent benefits from knowledge acquired from other agents during training.
(2) \textbf{On-/Off-Policy Distillation.}
Distillation typically follows a one-directional “teacher-to-student” paradigm, often among homogeneous agents.
HACRL instead enables bidirectional \textbf{mutual learning} among \textbf{heterogeneous agents}, where each agent simultaneously acts as both a knowledge provider and a learner.

We then propose \textbf{H}eterogeneous \textbf{A}gent \textbf{C}ollaborative \textbf{P}olicy \textbf{O}ptimization (\textbf{HACPO}) to solve HACRL (Figure~\ref{fig:HACRL}).
Compared to vanilla RL optimization, HACPO improves training in two critical aspects:
\noindent
\textbf{(1) Maximized Sample Utilization.}
In an $n$-agent system, each rollout can be reused up to $n$ times, substantially improving sample efficiency.
\noindent
\textbf{(2) Bidirectional Knowledge Transfer.}
By learning from one another, agents acquire complementary knowledge unavailable through self-learning alone, enabling all agents to break performance bottlenecks.
 
In this work, our contributions can be summarized as:

\textbf{[Problem Definition].} We formulate HACRL as a collaborative policy optimization problem for heterogeneous agents under RLVR, aiming to achieve mutual benefit through cross-agent rollout reuse while controlling distribution shifts caused by heterogeneity.

\textbf{[Algorithm].} We propose HACPO to address this problem, with four modifications: 
\textbf{(1)} Agent-Capability-Aware Advantage Estimation, 
\textbf{(2)} Model Capabilities Discrepancy Coefficient, 
\textbf{(3)} Exponential Importance Sampling, and 
\textbf{(4)} Stepwise Clipping. 
These tailored techniques enable the agents to engage in effective and stable mutual learning.

\textbf{[Performance].} 
We evaluate HACPO across three types of heterogeneity and seven challenging mathematical reasoning benchmarks, demonstrating consistent performance improvements, averaging 3.6\%,  while utilizing only half the rollout cost, compared to GSPO with double rollouts.

\section{Heterogeneous Agent Collaborative Reinforcement Learning}

\subsection{Heterogeneous LLM Agent Taxonomy}
\label{sec: Heterogeneous LLM Agent Taxonomy}
Let $\pi_\theta$ denote a large language model (LLM) agent parameterized by $\theta \in \Theta$, where $\Theta$ specifies the complete parameter space, including architecture, dimensionality, and trainable weights.
Let $V_{\pi}$ denote the output vocabulary of agent $\pi_\theta$.
We consider a collaborative policy optimization setting in which multiple LLM agents are jointly optimized toward a shared or coupled objective.

We categorize heterogeneity among  distinct LLM agents into three types: (1) heterogeneous state; (2) heterogeneous size; 
(3) heterogeneous model.

\begin{definition}[\textbf{Heterogeneous State}]
Two LLM agents $\pi_{\theta}^{(1)}$ and $\pi_{\theta}^{(2)}$ are said to exhibit \emph{heterogeneous state} if $\Theta_1 = \Theta_2$ and $\dim(\theta_1) = \dim(\theta_2)$, but $\theta_1 \neq \theta_2$ at the start of collaborative policy optimization.
\end{definition}

\begin{definition}[\textbf{Heterogeneous Size}]
Two LLM agents $\pi_{\theta}^{(1)}$ and $\pi_{\theta}^{(2)}$ are said to exhibit \emph{heterogeneous size} if they belong to the same model family and share the same architectural design principles, but have different parameter dimensionalities, i.e., $\dim(\theta_1) \neq \dim(\theta_2)$, with $\theta_1 \neq \theta_2$ at the start of collaborative policy optimization.
\end{definition}

\begin{definition}[\textbf{Heterogeneous Model}]
\label{definition: Heterogeneous Model}
Given two LLM agents $\pi_{\theta}^{(1)}$ and $\pi_{\theta}^{(2)}$, we define them to exhibit \emph{heterogeneous model} heterogeneity if their model architectures differ (e.g., tokenizer, attention mechanism, or training objective), their parameter spaces and sizes are distinct (i.e., $\Theta_1 \neq \Theta_2$), and their initial parameter instantiations are unique (i.e., $\theta_1 \neq \theta_2$).
\end{definition}

\begin{remark}
This taxonomy represents increasing degrees of heterogeneity: heterogeneous state differs only in optimization state, heterogeneous size introduces capacity mismatch, and heterogeneous model captures architectural and representational divergence. This hierarchy enables a systematic study of collaborative policy optimization among heterogeneous LLM agents.
\end{remark}

\subsection{Problem Formalization}

We consider the Heterogeneous Agent Collaborative Reinforcement Learning (HACRL) framework with $n$ LLM agents.
Each agent $k \in \{1,\dots,n\}$ is associated with a policy $\pi_{\theta}^{(k)}$.
All agents operate on a shared task distribution $\mathcal D$ and exhibit heterogeneity as defined in Section~\ref{sec: Heterogeneous LLM Agent Taxonomy}.

During training step $t$, for a prompt $x \sim \mathcal D$, each agent $k$ independently samples $G$ candidate responses from its policy.
The joint response set is:
\begin{equation}
\mathcal Y^{(k)}_t(x) = \{y^{(k)}_{1}, \dots, y^{(k)}_{G}\} \sim \pi^{(k)}_{\theta}(\cdot \mid x), \quad \mathcal Y_t(x) = \bigcup_{k=1}^n \mathcal Y^{(k)}_t(x).
\end{equation}
Since all agents solve the same task, a shared reward function $R(\cdot)$ is applied to every response.
The joint reward set is:
\begin{equation}
\label{eq:joint_reward}
\mathcal R^{(k)}_t(x) = \{ R(y^{(k)}_{i}) \mid i = 1,\dots,G \}, \quad \mathcal R_t(x) = \bigcup_{k=1}^n \mathcal R^{(k)}_t(x).
\end{equation}

\begin{definition}[\textbf{HACRL Problem}]
Consider a system of $n$ heterogeneous agents.
For a prompt $x \sim \mathcal D$, let $\mathcal Y(x)$ and $\mathcal R(x)$ denote the joint response and reward sets, respectively.
The objective of \emph{Heterogeneous Agent Collaborative Reinforcement Learning} is to optimize each agent
$k \in \{1,\dots,n\}$ by maximizing
\begin{equation}
J^{(k)}
=
J_{\mathrm{homo}}^{(k)}\!\left(Y^{(k)}_t(x), \mathcal R^{(k)}_t(x)\right)
+
J_{\mathrm{hete}}^{(k)}\!\left(\{Y^{(j)}_t(x), \mathcal R^{(j)}_t(x)\}_{j \neq k}\right),
\end{equation}
where $J_{\mathrm{homo}}^{(k)}$ is computed using rollouts generated by agent $k$ itself, and
$J_{\mathrm{hete}}^{(k)}$ leverages rollouts generated by the other agents.
\end{definition}

This formulation enables each agent to benefit from both self-generated experiences and cross-agent information under collaborative reinforcement learning.

\section{Heterogeneous Agent Collaborative Policy Optimization}
\label{sec: method}
In this section, we propose HACPO, a novel multi-agent collaborative optimization algorithm (procedure is shown in Appendix \ref{Overall of HACPO}): for \emph{one} given task, \emph{multiple} heterogeneous LLM agents execute independently and learn from each other.

Facing two challenges shown in Figure \ref{fig:HACRL}: the discrepancy of \emph{agent capability} and \emph{policy distribution}, Our method addresses the above challenges through the following components: (1) Agent-Capability-Aware Advantage Estimation; (2) Model Capability Discrepancy Coefficient; (3) Exponential Importance Sampling; (4) Stepwise Clipping. 




    

\subsection{Agent-Capability-Aware Advantage Estimation}
\label{sec:capability_aware_adv}
At training step $t$, for each prompt $x$, each agent $k \in \{1,...,n\}$ generates $G$ responses
$\{y_{t,i}^{(k)}\}_{i=1}^G \sim \pi^{(k)}_{\theta}(\cdot \mid x)$.
For a single agent, the standard group-relative advantage estimator \cite{shao2024deepseekmath} is:
\begin{equation}
\label{eq:single_agent_adv}
A_{\text{single}.t,i}^{(k)}(y_{t,i}^{(k)})
=
\frac{R\!\left(y_{t,i}^{(k)}\right) - \frac{1}{G} \sum_{i=1}^G R\!\left(y_{t,i}^{(k)}\right)}{std \{ {\mathcal{R}^{(k)}_{t}(x)} \}}.
\end{equation}

While Eq.~\eqref{eq:single_agent_adv} is appropriate for training a single model in isolation,
it becomes suboptimal in a multi-agent settings  where agents exhibit heterogeneous capabilities. 
Relying solely on self-generated responses fails to leverage valuable information from other agents, 
while \emph{naively averaging rewards across all agents disregards inter-model capability differences 
and often results in miscalibrated advantage estimates}.

To address this issue, we propose an \emph{agent-capability-aware} advantage estimator.
The advantage of response $y_{t,i}^{(k)}$ for agent $k$ is defined as
\begin{equation}
\label{eq:capability_aware_adv}
A_{t,i}^{(k)}\left(y_{t,i}^{(k)}\right)
=
\frac{R\!\left(y_{t,i}^{(k)}\right) - \mu_t^{(k)}}{std \{ {\mathcal{R}_{t}(x)} \}}, 
\quad \mu_t^{(k)} = \frac{1}{nG}
\sum_{j=1}^n
\sum_{i=1}^G
\omega_t^{(k,j)} \, R\!\left(y_{t,i}^{(j)}\right), 
\end{equation}
Here, $\mu_t^{(k)}$ is the capability-adjusted baseline.  $\omega_t^{(k,j)}$ is a \emph{capability ratio} that rescales responses from agent $j$  when estimating the baseline for agent $k$, defined as:
\begin{equation}
\label{eq:capability_ratio}
\omega_t^{(k,j)} = \frac{P_t^{(k)}}{P_t^{(j)}}, \quad
P_t^{(k)} = \frac{1}{|\mathcal{B}|G}\sum_{x \in  \mathcal{B}_t} \sum_{i=1}^G R\left(y_{t,i}^{(k)}\right),
\end{equation}
where the $P_t^{(k)}$ denotes an estimate of the performance of agent $k$, obtained by averaging the mean rewards of the batch $\mathcal{B}_t$ at the current step $t$.

Intuitively, when estimating the advantage baseline in a group for agent $k$, rewards from other agents are
reweighted according to their relative capabilities, allowing all responses to contribute while preserving agent-specific calibration. 

We further analyze the capability-aware adjustment in Appendix~\ref{sec: Theoretical Analysis}. 
Since the normalization factor $\mathrm{std}\{\mathcal{R}_{t}(x)\}$ is known to introduce a common but minor bias in practice, our analysis focuses on the centered reward 
$\bar{A}_{t,i}^{(k)}(y)=R(y)-\mu_t^{(k)}$.

As shown in Appendix~\ref{sec:oracle_capability_ratio_concentration}, the oracle capability-aware baseline is exactly unbiased: its expectation matches the agent's true expected reward, so the corresponding centered reward has zero mean. The Appendix \ref{sec: Empirical Ratio Concentration} further analyzes the practical batch-level empirical ratio and proves uniform high-probability bounds on its estimation error via Hoeffding's inequality and union bounds.

This theoretical guarantee ensures that HACPO can safely incorporate heterogeneous cross-agent rollouts to enrich the learning signal and maximize sample efficiency—while introducing no systematic bias in the oracle case, and only bounded, controllable error in the finite-batch implementation.

\subsection{Model Capabilities Discrepancy  Coefficient}
\label{sec: Model Capabilities Discrepancy  Coefficient}

To address capability discrepancies across heterogeneous agents, we employ the capability ratio $\omega_t^{(k,j)}$, introduced earlier, as a quantitative measure of relative model competence.
When training agent $k$, advantages computed from samples generated by other
agents are rescaled according to their relative capability. This design encourages an
agent to learn more aggressively from stronger agents, while adopting a more conservative
update when incorporating samples from weaker ones.

Formally, suppose that agent $k$ is updated at training step $t$ using a response
$y_{t,i}^{(j)}$ generated by agent $j$. The effective advantage used for updating agent
$k$ is defined as
\begin{equation}
\label{eq:capability_scaled_adv}
\tilde{A}_{t,i}^{(k)} \left(y_{t,i}^{(j)}\right)=
\begin{cases}
A_{t,i}^{(k)} \left(y_{t,i}^{(j)}\right)
& j = k \\
\omega_t^{(j,k)} \cdot A_{t,i}^{(k)}\left(y_{t,i}^{(j)}\right)
& j \neq k
\end{cases}
\end{equation}
Here, $\omega_t^{(k,j)}$ represents the performance ratio between agents $k$ and $j$ at
training step $t$, with larger values indicating that agent $k$ outperforms agent $j$.

The capability ratio $\omega_t^{(k,j)}$ serves two complementary roles to enable stable collaboration: 

\textbf{(1) Baseline Calibration:} In Sec.~\ref{sec:capability_aware_adv}, it rescales cross-agent reward statistics to properly calibrate the baseline $\mu_t^{(k)}$. 

\textbf{(2) Gradient Modulation:} In Eq.~\eqref{eq:capability_scaled_adv}, it acts as a modulation factor that amplifies learning signals from stronger agents while attenuating those from weaker ones.

\subsection{Exponential Importance Sampling}
\label{Exp IS}

Importance sampling is commonly used to correct distributional mismatches between samples generated by different policies. Following GSPO, we adopt a sequence-level importance ratio and extend it to the heterogeneous multi-agent setting. When updating agent $k$ at
step $t$, for a response $y_{t,i}^{(j)}$ generated by agent $j$, we define
\begin{equation}
s_{t,i}^{(k,j)}
=
\frac{\pi^{(k)}_{\theta}\!\left(y_{t,i}^{(j)}\right)^{\frac{1}{L_k}} }
{\pi_{\theta_{\mathrm{old}}}^{(j)}\!\left(y_{t,i}^{(j)}\right)^{\frac{1}{L_j}} }
.
\end{equation}
Here, $L_k$ and $L_j$ respectively represent the length of response $y_{t,i}^{(j)}$ under tokenizers of agent k and j.  
For combinations of heterogeneous agents that satisfy Definition~\ref{definition: Heterogeneous Model} with incompatible tokenizers, we detokenize the response into text and retokenize it using the target agent's tokenizer. Through sequence-level normalization, the slight length discrepancies arising from re-tokenization become negligible. The experiment shows that on the MATH training set, the tokenizers of Qwen and Llama differ in the number of tokens produced for the same reasoning (700 tokens average) by only 4\%.

In heterogeneous settings, inter-agent policy discrepancies can be much larger than
on-policy updates, making direct use of this ratio overly aggressive. To mitigate this
issue, we introduce a non-gradient exponential reweighting:
\begin{equation}
\label{eq: hete IS}
\tilde{s}_{t,i}^{(k,j)}
=
s_{t,i}^{(k,j)} \cdot \bigl(\mathrm{sg}[\,s_{t,i}^{(k,j)}\,]\bigr)^{\alpha} \quad  k \neq j, s_{t,i}^{(k,j)}<1.0 
\end{equation}
where $\mathrm{sg}[\cdot]$ denotes the stop-gradient operator and $\alpha \ge 0$ controls
the degree of conservativeness.

This design biases agent $k$ toward learning from agents whose output distributions are more aligned with its own, while reducing the impact of large cross-agent distribution shifts.

\subsection{Stepwise Clipping}

The cross-agent importance sampling ratio $s^{(k,j)}_{t,i}$ exhibits fundamentally different behaviors from the self-agent ratio $s^{(k,k)}_{t,i}$, necessitating tailored clipping mechanisms (see Appendix~\ref{Heterogeneous Agent Importance Sampling Analysis} for further empirical details):

\textbf{Asymmetric Clipping Bounds.} Across training iterations, $s^{(k,k)}_{t,i}$ typically fluctuates closely around $1.0$, requiring a narrow clipping range (e.g., $[0.9997, 1.0004]$). In contrast, $s^{(k,j)}_{t,i}$ evolves dynamically and remains significantly smaller than $1.0$. To maintain a reasonable clipping proportion, we adopt an asymmetric clipping range for the cross-agent setting, which is typically bounded within $[0.8, 1.0]$.

\textbf{Stepwise Clipping.} Within a single training batch, $s^{(k,k)}_{t,i}$ decays as parameter updates increase, which naturally elevates the clipping proportion. However, $s^{(k,j)}_{t,i}$ fluctuates irregularly. To prevent the later updates within a batch from being dominated by cross-agent responses, we introduce a stepwise strategy that gradually tightens the cross-agent clipping bounds as updates progress within one step.

Let $m$ denote the number of parameter updates performed so far within the current step, and $\delta_{\mathrm{step}}$ denote the per-update tightening factor. Combining these principles, the clipping function for cross-agent importance sampling is formulated as:
\begin{equation}
\mathrm{clip}\!\left(
s^{(k,j)}_{t,i},\,
1 - \delta + m \cdot \delta_{\mathrm{step}},\,
1.0
\right), 
\end{equation}
where the lower-bound hyperparameter $\delta$ is empirically initialized to $0.2$ and is dynamically within a step reduced following our stepwise strategy.

\section{Experiment}
\label{sec: Experiment}
\textbf{Setting Details.}
We adopt 7.5k high quality math questions from the MATH dataset \cite{math} for training.
During evaluation, we select a comprehensive set of benchmarks: MATH-500, MATH, GSM8K \cite{gsm8k}, AIME2025, AMC23, Minerva\cite{minerva} and Olympiad\cite{olympiadbench}.

To verify the effectiveness of our method, we conduct experiments on the three heterogeneity settings mentioned in Section \ref{sec: Heterogeneous LLM Agent Taxonomy}.
We compare our approach against the following baselines: (1) \textbf{Standard Single-Agent Baselines (GRPO, GSPO)}, which serve as benchmarks for isolated training performance (same rollout cost as HACPO but with half the policy updates); (2) \textbf{Data-Equivalent Baseline (GSPO$\times$2)}, a single-agent GSPO setting with double rollouts and updates in every step. This serves to rule out the impact of increased data and updates, verifying the complementary value of heterogeneous agents (double the rollout cost of HACPO but with the same policy updates); (3) \textbf{Naive Collaborative Baseline (Naive)}, a two-agent setting with shared rollouts but lacking the algorithmic innovations in Section \ref{sec: method}, used to validate the necessity of our proposed discrepancy mitigation techniques (same rollout and policy update costs as HACPO).

We focus our empirical comparisons on different RLVR baselines, omitting Knowledge Distillation (KD) and Multi-Agent RL (MARL) due to fundamental differences in problem formulation. Specifically: (1) Unlike KD, which is a unidirectional process relying on a frozen, homogeneous teacher's output distribution, HACRL enables bidirectional mutual improvement across heterogeneous agents (regardless of relative capabilities). (2) Unlike MARL, which typically requires coupled agents for joint execution, our agents maintain strictly independent execution at inference time.
\begin{table}[h!]
\scriptsize
\caption{Main results across three heterogeneity settings (state, size and model). We compare our method against Standard Single-Agent Baselines (GRPO, GSPO), a Resource-Equivalent Baseline (GSPO$\times$2) and a Naive multi-agent rollout share baseline(Naive). The content in the brackets represents a comparison with GSPO $\times$ 2.}
\label{Main Results}
\begin{center}
\begin{tabular}{lccccccccr}
\toprule
Model & MATH-500 & MATH & GSM8K & AIME2025 & AMC23 & Minerva & Olympiad & AVG\\
\midrule
\multicolumn{9}{c}{Qwen3-4B and Qwen3-4B-Instruct (\textbf{Heterogeneous State})}  \\
\midrule
4B                 & 0.802 & 0.836 & 0.907 & 0.335 & 0.65  & 0.39  & 0.524 & 0.635 \\
4B (GRPO)           & 0.88  & 0.889 & 0.918 & 0.582 & 0.775 & 0.386 & 0.592 & 0.717\\
4B (GSPO)           & 0.854 & 0.87  & 0.925 & 0.485 & 0.675 & 0.412 & 0.564 & 0.684 \\
4B (GSPO$\times$2)         & 0.876 & 0.875 & 0.923 & 0.522 & 0.675 & 0.39 & 0.579 & 0.691 \\
4B (Naive)          & 0.728 & 0.737 & 0.891 & 0.378 & 0.6   & 0.353 & 0.394 & 0.583 \\
\rowcolor{gray!20} 4B(HACPO) & \textbf{0.91}  & \textbf{0.905} & \textbf{0.933} & \textbf{0.622} & \textbf{0.85}  & \textbf{0.423} & \textbf{0.643} & \textbf{0.755 (+0.064)}\\
4B-Instruct        & 0.938 & 0.937 & 0.936 & 0.696 & 0.85  & 0.441 & 0.722 & 0.789 \\
4B-Instruct (GRPO)  & 0.93  & 0.933 & 0.933 & 0.676 & 0.875 & 0.43  & 0.72 & 0.785 \\
4B-Instruct (GSPO)  & 0.938 & 0.94  & 0.939 & 0.72  & 0.9   & 0.43  & 0.726 & 0.799 \\
4B-Instruct (GSPO$\times$2)  & 0.932 & 0.939  & 0.942 & 0.74  & 0.9   & 0.43  & 0.711 & 0.799 \\
4B-Instruct(Naive) & 0.844 & 0.845 & 0.936 & 0.547 & 0.725 & 0.39  & 0.552 & 0.691 \\
\rowcolor{gray!20} 4B-Instruct(HACPO)   & \textbf{0.948} & \textbf{0.943} & \textbf{0.946} & \textbf{0.757} & \textbf{0.95}  & \textbf{0.452} & \textbf{0.732} & \textbf{0.818 (+0.019)}\\
\midrule
\multicolumn{9}{c}{Qwen3-1.7B-Base and Qwen3-4B-Base  (\textbf{Heterogeneous Size})}  \\
\midrule
1.7B-Base         & 0.5   & 0.483 & 0.616 & 0.033 & 0.3   & 0.206 & 0.229 & 0.338 \\
1.7B-Base (GRPO)   & 0.682 & 0.652 & 0.824 & 0.16 & 0.375 & 0.272 & 0.298 & 0.466 \\
1.7B-Base (GSPO)   & 0.648 & 0.641 & 0.826 & 0.148 & 0.45  & 0.272 & 0.287 & 0.467 \\
1.7B-Base (GSPO$\times$2)   & 0.664 & 0.65 & \textbf{0.829} & 0.177 & 0.375  & 0.265 & 0.293 & 0.465 \\
1.7B-Base (Naive)  & 0.608 & 0.601 & 0.798 & 0.147 & 0.325 & 0.235 & 0.263 & 0.425 \\
\rowcolor{gray!20} 1.7B-Base(HACPO)    & \textbf{0.69}  & \textbf{0.674} & 0.822 & \textbf{0.225} & \textbf{0.45}  & \textbf{0.279} & \textbf{0.314} & \textbf{0.493 (+0.028)}\\
4B-Base           & 0.61  & 0.676 & 0.445 & 0.1   & 0.4   & 0.308 & 0.347 & 0.412 \\
4B-Base (GRPO)     & 0.796 & 0.788 & 0.885 & \textbf{0.307} & 0.475 & 0.349 & 0.454 & 0.579 \\
4B-Base (GSPO)     & 0.782 & 0.787 & 0.877 & 0.25  & 0.525 & 0.368 & 0.46  & 0.578 \\
4B-Base (GSPO$\times$2)     & 0.756 & 0.794 & 0.873 & 0.208  & 0.55 & 0.382 & 0.463  & 0.575 \\
4B-Base (Naive)    & 0.708 & 0.712 & 0.895 & 0.196 & 0.475 & 0.342 & 0.354 & 0.526 \\
\rowcolor{gray!20} 4B-Base (HACPO) & \textbf{0.808} & \textbf{0.801} & \textbf{0.903} & 0.267 & \textbf{0.575} & \textbf{0.386} & \textbf{0.467} & \textbf{0.601 (+0.026)}\\
\midrule
\multicolumn{9}{c}{Qwen3-4B-Base and Llama3.2-3B-Instruct  (\textbf{Heterogeneous Model})}  \\
\midrule
Qwen3             & 0.61  & 0.676 & 0.445 & 0.1   & 0.4   & 0.308 & 0.347 & 0.412 \\
Qwen3 (GRPO)     & \textbf{0.796} & 0.788 & 0.885 & \textbf{0.307} & 0.475 & 0.349 & 0.454 & 0.579 \\
Qwen3 (GSPO)       & 0.782 & 0.787 & 0.877 & 0.25  & 0.525 & 0.368 & \textbf{0.46} & 0.578 \\
Qwen3 (GSPO$\times$2)     & 0.756 & \textbf{0.794} & 0.873 & 0.208  & 0.55 & \textbf{0.382} & 0.463  & 0.575 \\
Qwen3 (Naive)      & 0.734 & 0.712 & 0.895 & 0.143 & 0.55  & 0.342 & 0.354 & 0.526 \\
\rowcolor{gray!20} Qwen3 (HACPO) & 0.786 & 0.783 & \textbf{0.921} & 0.268 & \textbf{0.6}   & 0.379 & 0.442 & \textbf{0.597 (+0.022)}\\
Llama3.2          & 0.267 & 0.441 & 0.788 & 0.0   & 0.2   & 0.169 & 0.158 & 0.289 \\
Llama3.2 (GRPO)   & 0.502 & 0.507 & 0.814 & 0.0 & 0.25 & \textbf{0.199} & 0.174 & 0.349 \\
Llama3.2 (GSPO)   & 0.512 & 0.501 & 0.812 & 0.054 & 0.225 & 0.184 & 0.17 & 0.351 \\
Llama3.2 (GSPO$\times$2)   & 0.488 & 0.498 & \textbf{0.829} & 0.0 & 0.175 & 0.188 & 0.159 & 0.334 \\
Llama3.2 (Naive)  & 0.406 & 0.407 & 0.734 & 0.0   & 0.225 & 0.177 & 0.107 & 0.294 \\
\rowcolor{gray!20} Llama3.2 (HACPO)  & \textbf{0.566} & \textbf{0.548} & 0.826 & \textbf{0.054} & \textbf{0.35}  & 0.176 & \textbf{0.208} & \textbf{0.39 (+0.056)}\\
\bottomrule
\end{tabular}
\end{center}
\vspace{-0.2cm}
\end{table}
\begin{figure}[h!]
  \centering
  \captionsetup[subfloat]{font=tiny}
  \subfloat[Qwen3-4B and Qwen3-4B-Instruct\label{fig:group1}]{
    \begin{minipage}{0.32\linewidth}
      \includegraphics[width=\linewidth]{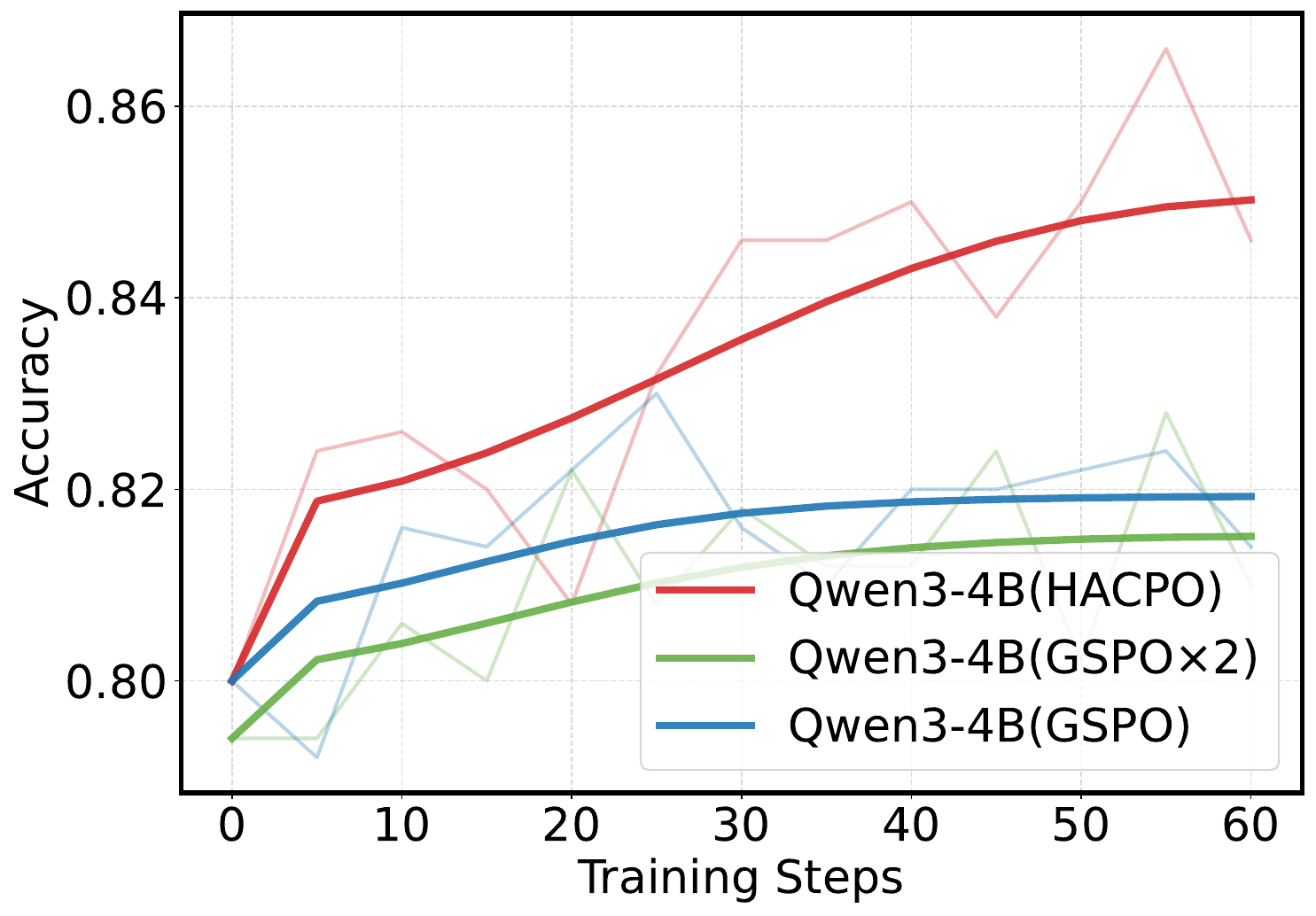}
      \\[0.2cm]
      \includegraphics[width=\linewidth]{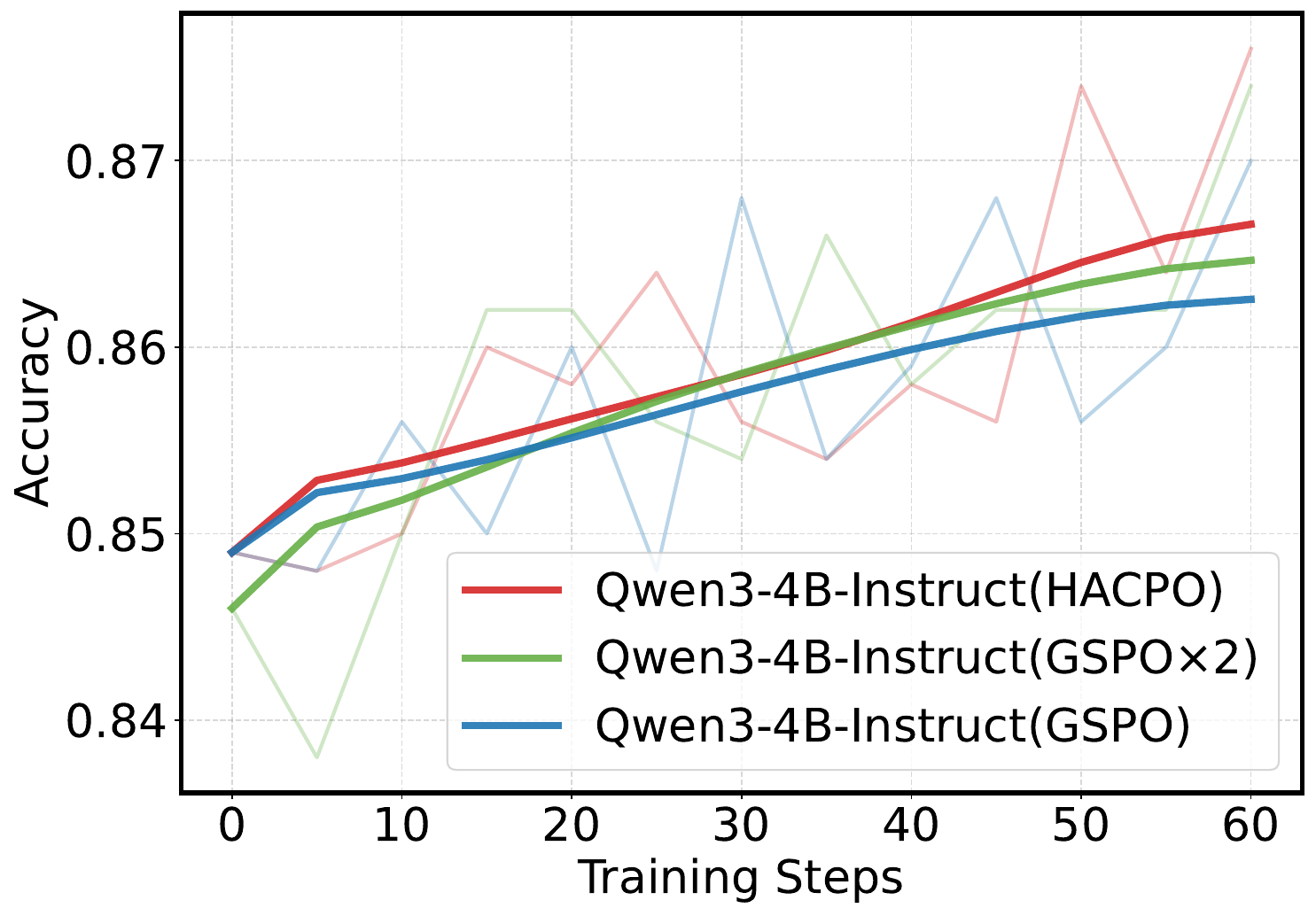}
    \end{minipage}
  }
  \hfill
  \subfloat[Qwen3-1.7B-Base and Qwen3-4B-Base\label{fig:group2}]{
    \begin{minipage}{0.32\linewidth}
      \includegraphics[width=\linewidth]{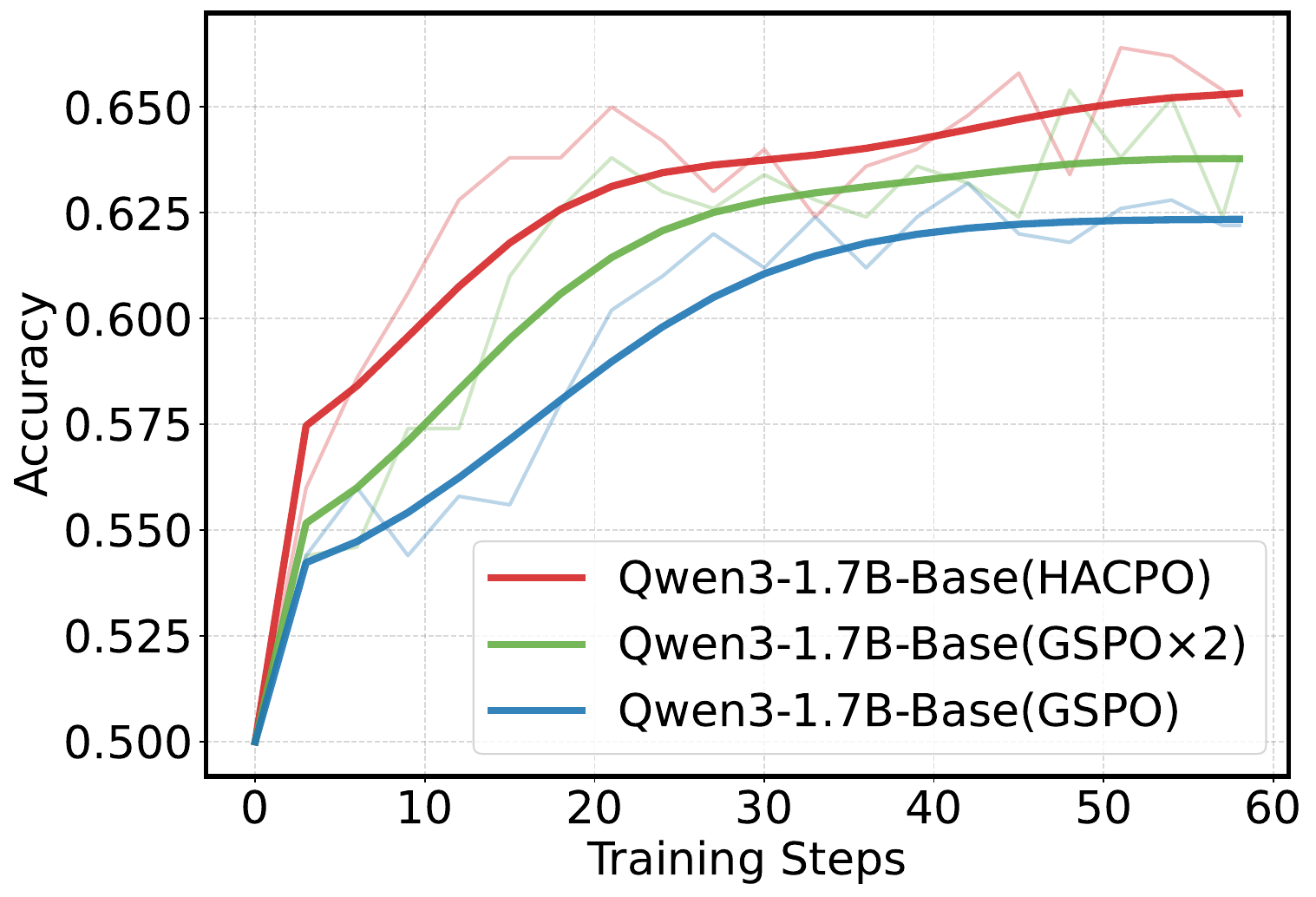}
      \\[0.2cm]
      \includegraphics[width=\linewidth]{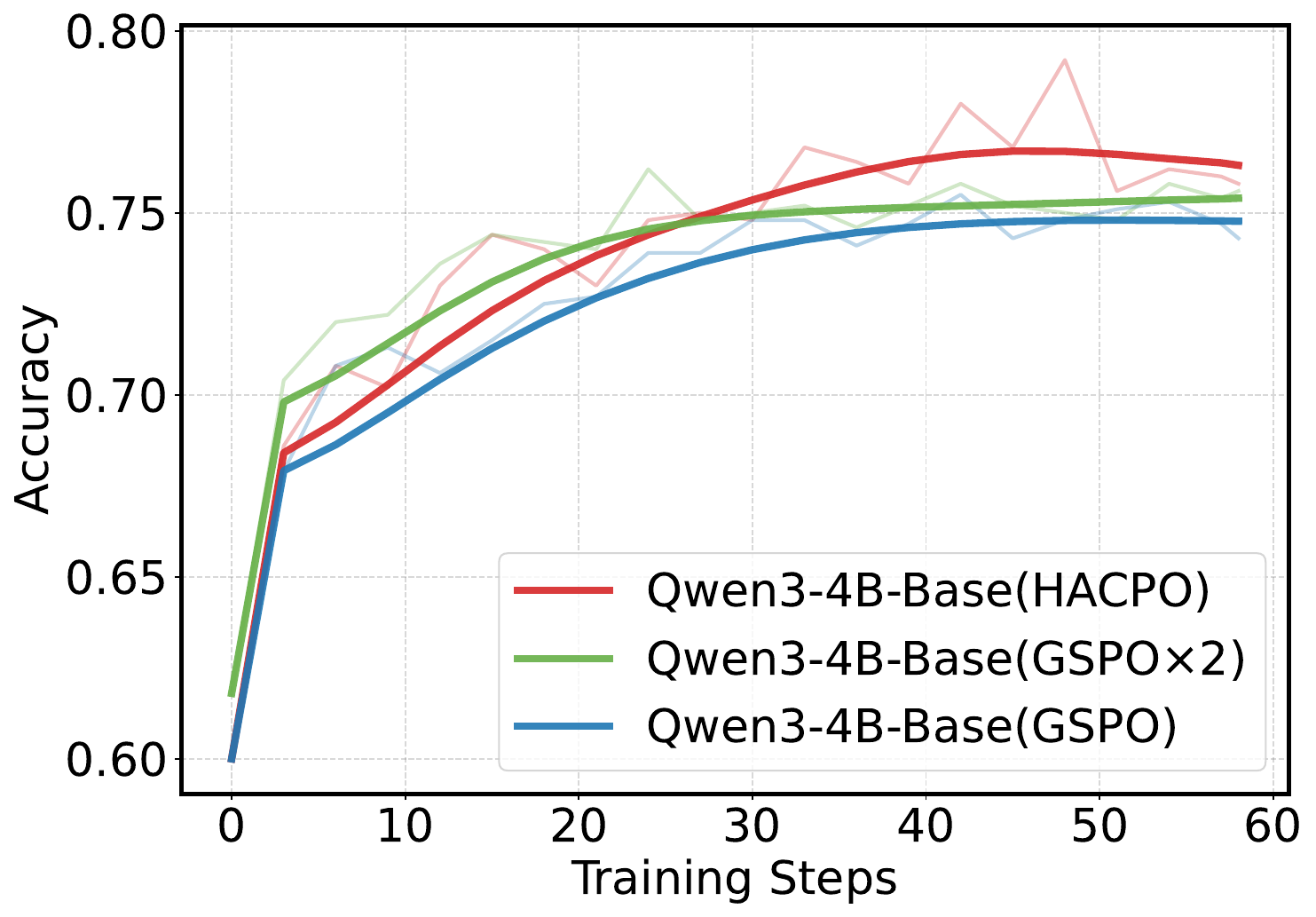}
    \end{minipage}
  }
  \hfill
  \subfloat[Qwen3-4B-Base and Llama3.2-3B-Instruct\label{fig:group3}]{
    \begin{minipage}{0.32\linewidth}
      \includegraphics[width=\linewidth]{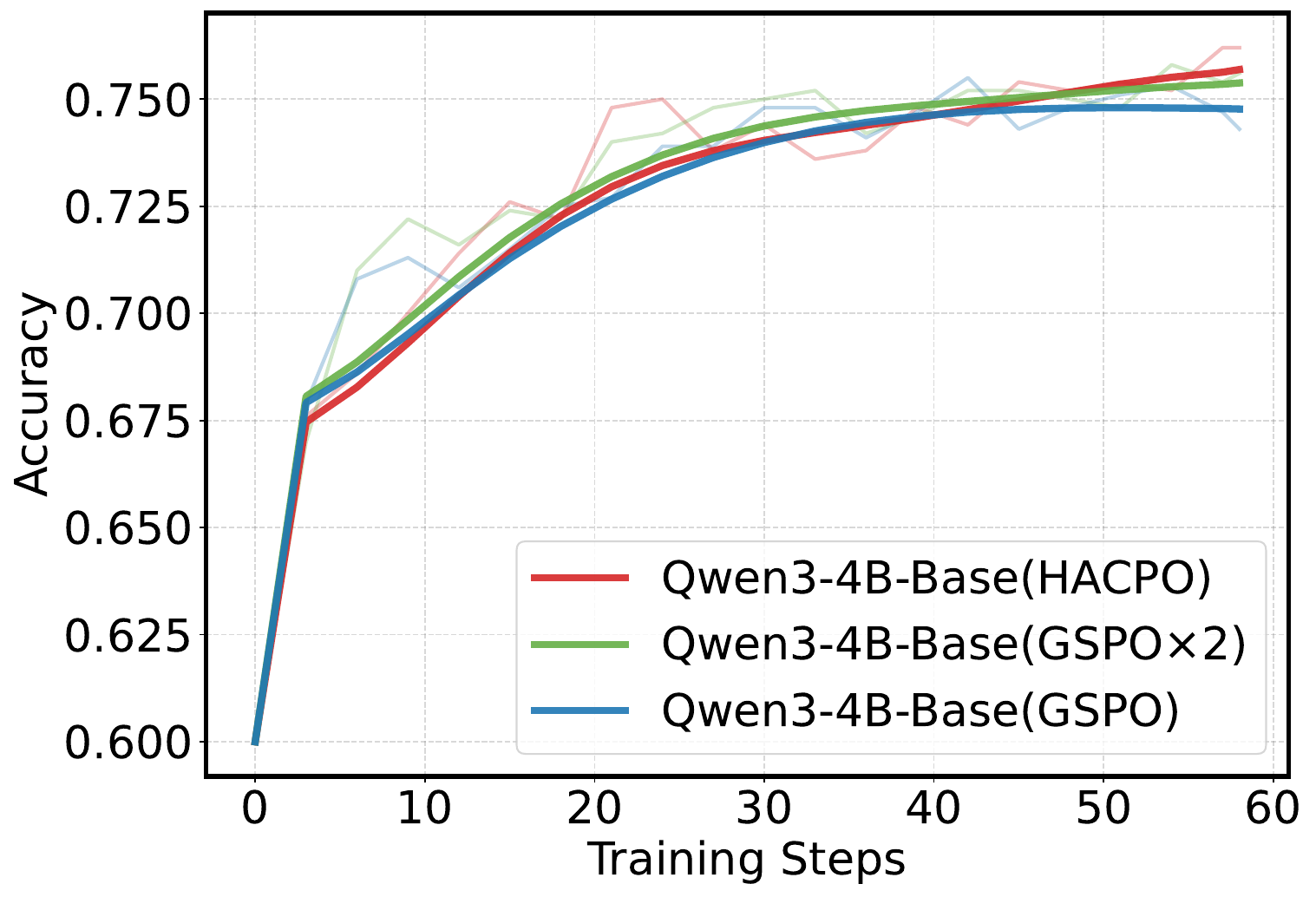}
      \\[0.2cm]
      \includegraphics[width=\linewidth]{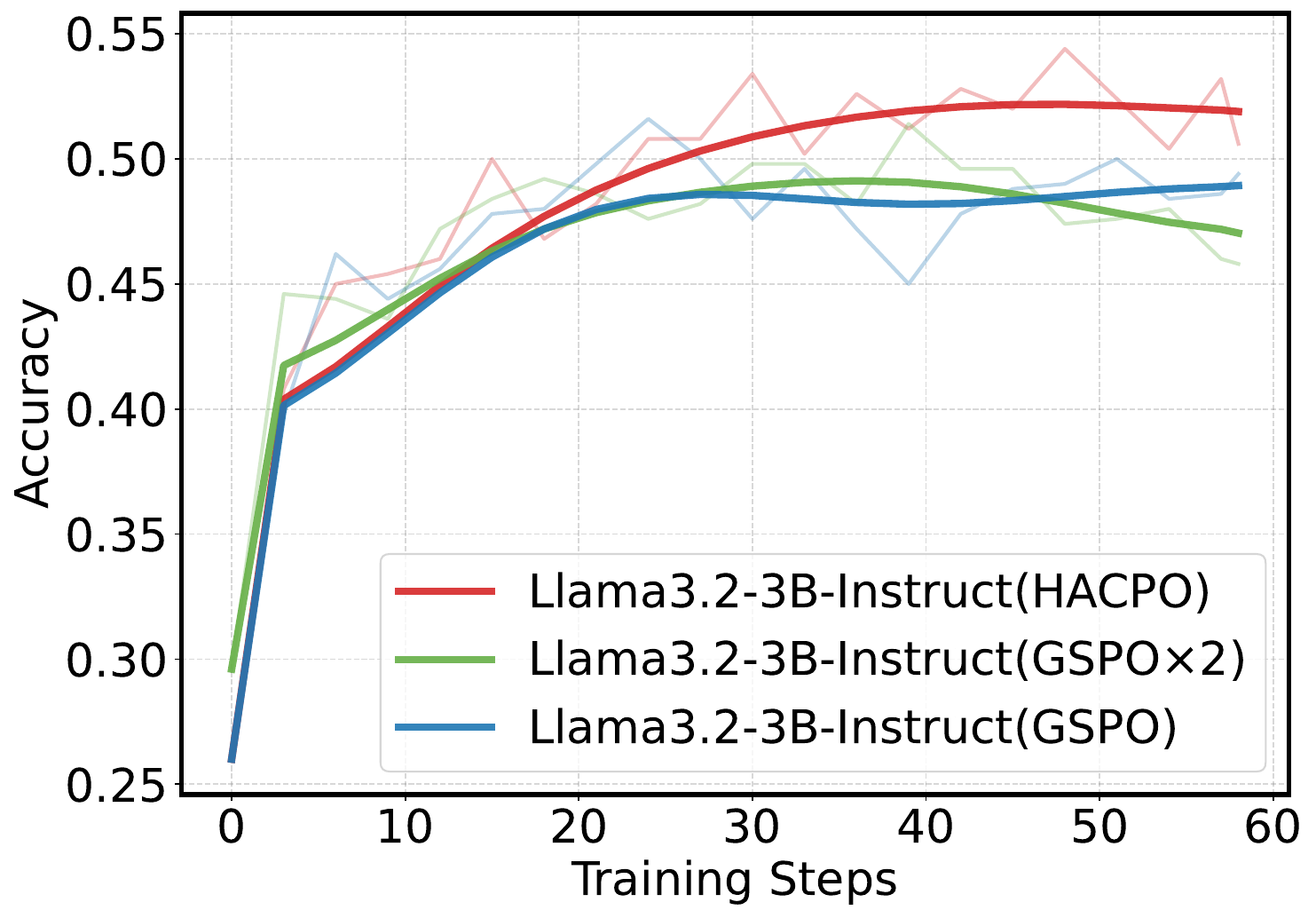}
    \end{minipage}
  }
  \caption{Training Curves of HACPO, GSPO and GSPO $\times$ 2}
  \label{fig: training_curve}
  \vspace{-0.5cm}
\end{figure}
\subsection{Result and Analysis}
As detailed in Table \ref{Main Results}, HACPO demonstrates superior final performance compared to all baselines across various heterogeneous settings. Across 7 benchmarks, HACPO achieves an average accuracy improvement of +3.6\% over the strong baseline GSPO×2, while requiring only half the rollout cost. To illustrate the learning dynamics, Figure \ref{fig: training_curve} presents the training curves of HACPO versus the single-agent GSPO and GSPO $\times$ 2 baseline. We attribute these performance gains to two primary mechanisms inherent in the HACPO: (1) Capability-driven guidance, where stronger models assist in enhancing the performance of weaker ones; and (2) Mutual knowledge exchange, which involves the sharing of complementary rollouts—encompassing both correct solutions and informative errors—between agents.

The improvements are consistent across model combinations, benchmarks, and random seeds, confirming that HACPO's gains are robust. Specifically, we conduct reproducibility experiments with five random seeds on the Qwen3-1.7B-Base and Qwen3-4B-Base combination. Across all runs, HACPO consistently outperforms GSPO by a stable margin: +3.56\% on average for Qwen3-1.7B-Base and +2.84\% for Qwen3-4B-Base on MATH500. Full results are provided in Appendix \ref{sec: The Performance over Different Seeds}. In total, we evaluate HACPO on 6 heterogeneous agent combinations. The main text presents 3 representative settings; results for the additional 3 combinations are deferred to Appendix \ref{sec:Three More Model Combinations}. 

We also report peak GPU memory utilization and wall-clock runtime to demonstrate HACPO's more efficient use of rollout compute. Detailed profiling results are included in Appendix \ref{sec: The GPU Peak Memory and Overall Runtime}.


\textbf{Heterogeneous State.}  
In the Qwen3-4B and Qwen3-4B-Instruct setting, we observe asymmetric but non-trivial gains: while the 4B model improves more substantially, the Instruct model also exhibits consistent performance improvements.
Although this setting corresponds to heterogeneous state, where agents differ only due to post-training stages, HACPO still enables the stronger agent to benefit from the weaker one. Specifically, the weaker agent contributes complementary exploration signals—such as alternative reasoning paths and informative errors—that are underrepresented in the stronger agent’s own rollouts.
As a result, learning is not purely unidirectional. Even when capability-driven guidance dominates, the stronger agent can still extract useful supervisory signals from the weaker agent, leading to measurable performance gains.

\textbf{Heterogeneous Size.}  
In the Qwen3-1.7B-Base and Qwen3-4B-Base setting, both models improve significantly, validating the mechanism of mutual knowledge exchange. Even with lower capability, the 1.7B model serves as a distinct explorer, generating valuable erroneous responses and a few unique correct solutions that the 4B model fails to produce, thereby facilitating bidirectional knowledge transfer. 

\textbf{Heterogeneous Model.}  
Finally, we consider the heterogeneous model setting involving Qwen3-4B-Base and Llama3.2-3B-Instruct, which differ substantially in architecture, tokenizer, and training objectives. Despite this high degree of heterogeneity, we observe consistent performance improvements in both models.
These results demonstrate that HACPO is able to extract transferable knowledge from cross-model rollouts and effectively share it across heterogeneous agents. By leveraging verified responses—including correct solutions and informative failure cases—each model can learn from complementary reasoning patterns that are absent from its own policy distribution.


The experimental results show that HACPO significantly improves performance across all three types of heterogeneity, validating its generality and robustness. Additionally, the differences observed among the three settings shed light on the two underlying mechanisms of HACPO.
\subsection{Ablation Study}
\label{sec: Ablation Experiment}
\textbf{Agent-Capability-Aware Advantage Estimation.} Ablation on the Qwen3-1.7B/4B-Base combination (Table \ref{Ablation of Advantage Estimator}) confirms that removing this module significantly degrades performance. This decline stems from the systematic bias in standard group-relative advantages in multi-agent setting due to the capability discrepancy cross heterogenous agents. Our method addresses this by constructing \emph{agent-capability-aware} advantage baselines—raising the standard for the stronger models and lowering it for the weaker ones—thereby preserving the unbiasedness of the advantage estimator established in Theorem \ref{thm:oracle_unbiased_baseline}.
\begin{table*}[h!]
\scriptsize
\caption{Ablation of Advantage Estimator}
\label{Ablation of Advantage Estimator}
\vspace{-0.3cm}
\begin{center}
\begin{tabular}{lcccccccccr}
\toprule
Model & MATH-500 & MATH & GSM8K & AIME2025 & AMC23 & Minerva & Olympiad & AVG\\
\midrule
1.7B(HACPO - Adv)     & \textbf{0.696} & 0.659 & \textbf{0.825} & 0.126 & 0.375 & 0.261 & 0.313 & 0.465\\
1.7B(HACPO)  & 0.69  & \textbf{0.674} & 0.822 & \textbf{0.225} & \textbf{0.45}  & \textbf{0.279} & \textbf{0.314} & \textbf{0.493}\\
4B(HACPO - Adv)      & 0.774 & 0.771 & \textbf{0.912} & \textbf{0.308} & 0.55  & 0.348 & 0.442 & 0.586\\
4B(HACPO)    & \textbf{0.808} & \textbf{0.801} & 0.903 & 0.267 & \textbf{0.575} & \textbf{0.386} & \textbf{0.467} & \textbf{0.601}\\
\bottomrule
\end{tabular}
\end{center}
\vskip -0.1in
\end{table*}

\textbf{Model Capabilities Discrepancy  Coefficient.}
We isolate this coefficient in gradient modulation by disabling it in Eq.\ref{eq:capability_scaled_adv}, while retaining it for advantage estimation. Table \ref{Table: Ablation of Model Capabilities Discrepancy Coefficient} confirms that removing this modulation degrades performance. This validates the coefficient's critical function as a capability-aware scaler: it amplifies gradients from stronger agents to accelerate learning, while attenuating updates from weaker ones to mitigate potential noise. 
\begin{table*}[h!]
\scriptsize
\caption{Ablation of Model Capabilities Discrepancy  Coefficient}
\vspace{-0.2cm}
\label{Table: Ablation of Model Capabilities Discrepancy  Coefficient}
\begin{center}
\begin{tabular}{lcccccccccr}
\toprule
Model & MATH-500 & math & gsm8k & aime2025 & ACM23 & minerva & olympiad & AVG\\
\midrule
1.7B(HACPO - $\omega$ )     & 0.666 & 0.657 & 0.806 & 0.105 & 0.425 & 0.25 & \textbf{0.324} & 0.462\\
1.7B(HACPO)  & \textbf{0.69}  & \textbf{0.674} & \textbf{0.822} & \textbf{0.225} & \textbf{0.45}  & \textbf{0.279} & 0.314 & \textbf{0.493}\\
4B(HACPO - $\omega$)      & 0.803 & 0.797 & 0.902 & 0.261 & 0.55  & \textbf{0.401} & \textbf{0.475} & 0.6 \\
4B(HACPO)    & \textbf{0.808} & \textbf{0.801} & \textbf{0.903} & \textbf{0.267} & \textbf{0.575} & 0.386 & 0.467 & \textbf{0.601}\\
\bottomrule
\end{tabular}
\end{center}
\vskip -0.15in
\end{table*}

\textbf{Exponential Importance Sampling.}
We examined the impact of $\alpha$ on Qwen3-1.7B/4B-Base and Qwen3-4B/8B-Base combinations (Table \ref{Table: Ablation of Exponential Importance Sampling}). Results highlight a critical trade-off: increasing $\alpha$ enforces a more conservative policy towards cross-agent responses, which aids stability by suppressing large distribution shifts but hinders efficiency by reducing the effective learning signal. Thus, the optimal $\alpha$ is model combination dependent, necessitating a balance between stable convergence and maximal information extraction.

\textbf{Stepwise Clipping.}
We assess the necessity of this mechanism on the Qwen3-4B/8B-Base combination. As visualized in Figure \ref{fig:The Ablation of Stepwise Clipping}, removing the clipping constraint (\textbf{no Clip}) causes severe instability, while omitting the stepwise schedule (\textbf{no Stepwise}) leads to suboptimal convergence compared to the full HACPO. This confirms that the stepwise clipping is indispensable for stabilizing collaborative learning, as neither unconstrained nor statically bounded updates suffice to handle high-variance cross-agent responses.

\begin{figure}[h!]
    \centering
    \begin{minipage}[c]{0.42\textwidth}
        \centering
        \footnotesize
        \captionof{table}{The Impact of $\alpha$ (MATH500)}
        \label{Table: Ablation of Exponential Importance Sampling}
        \begin{tabular}{lcccr}
        \toprule
        $\alpha$ & 0.0 & \textbf{1.0} & 2.0 & 3.0 \\
        \midrule
        \multicolumn{5}{c}{Qwen3-1.7B-Base and Qwen3-4B-Base} \\
        \midrule
        1.7B-Base  & 0.63 & 0.664 & 0.654 & \textbf{0.668} \\
        4B-Base & 0.756 & \textbf{0.792} & 0.768 & 0.77 \\
        \midrule
        \multicolumn{5}{c}{Qwen3-4B-Base and Qwen3-8B-Base} \\
        \midrule
        4B-Base  & 0.772 & 0.776 & 0.77 & \textbf{0.776} \\
        8B-Base & 0.764 & 0.772 & 0.766 & \textbf{0.778} \\
        \bottomrule
        \end{tabular}
    \end{minipage}
    \hfill
    \begin{minipage}[c]{0.56\textwidth}
        \centering
        \subfloat[Qwen3-4B-Base]{\includegraphics[width=0.46\linewidth]{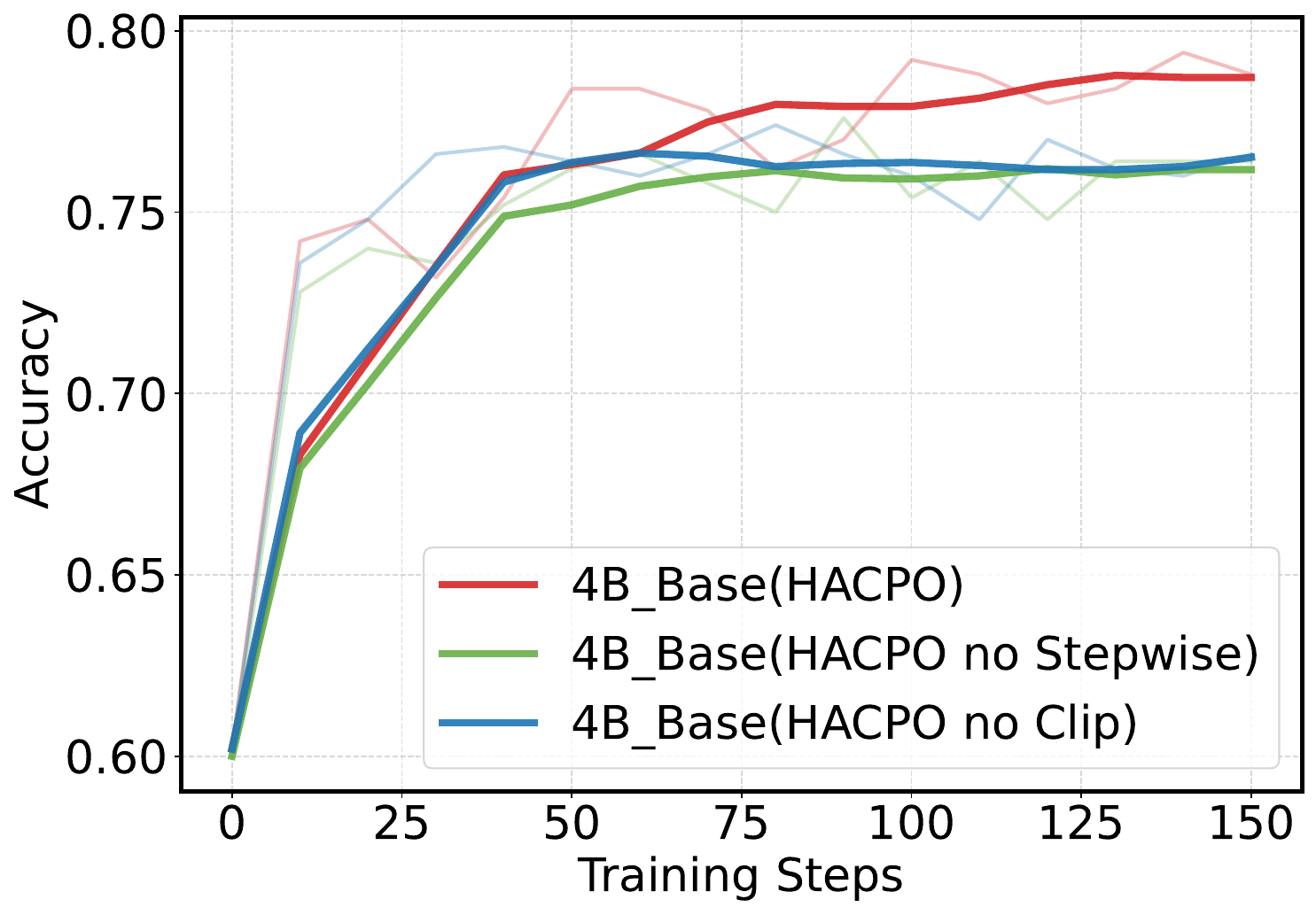}}
        \hspace{0.3cm}%
        \subfloat[Qwen3-8B-Base]{\includegraphics[width=0.46\linewidth]{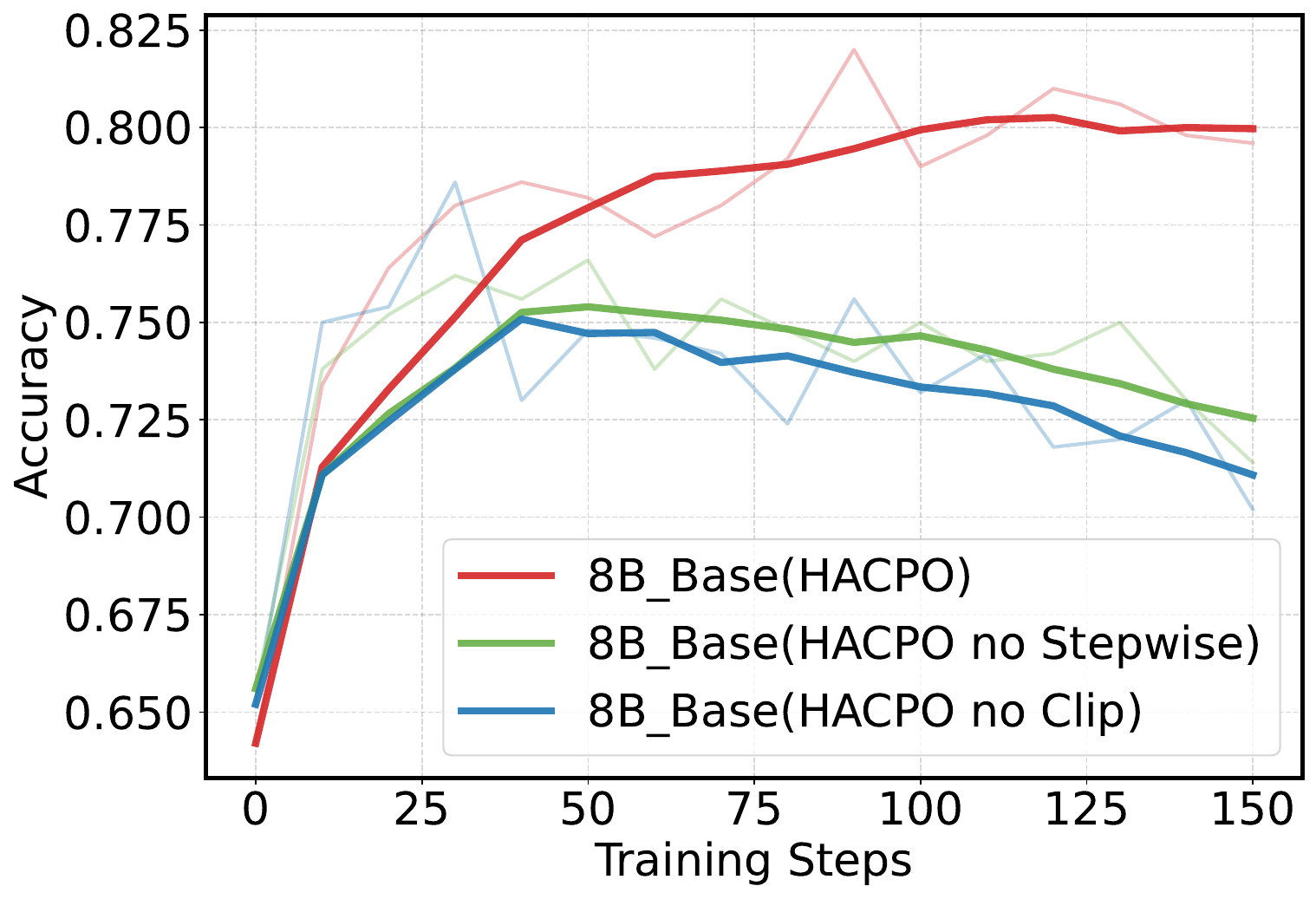}}
        \caption{Ablation of Stepwise Clipping (MATH500)}
        \label{fig:The Ablation of Stepwise Clipping}
    \end{minipage}
    
    \vskip -0.2in
\end{figure}


\section{Related Work}
\label{sec: Related Work}
Our work is most closely related to Reinforcement Learning with Verifiable Rewards (RLVR), with Group Sequence Policy Optimization (GSPO) being the most relevant prior study. GSPO demonstrates the efficacy of sequence-level importance sampling in Mixture-of-Experts (MoE) models, where tokens may originate from different networks. This insight inspires our approach to facilitate rollout sharing among heterogeneous agents. Additionally, our work shares conceptual parallels with Multi-Agent Reinforcement Learning (MARL). A more detailed discussion of related work is provided in Appendix \ref{sec: related work}.

\section{Conclusion}
\label{sec: Conclusion}
We propose HACRL, a collaborative multi-agent reinforcement learning problem tailored for heterogeneous agent ecosystems.
HACRL enables principled rollout sharing among heterogeneous agents, improving sample utilization efficiency while promoting cross-agent knowledge transfer.
To instantiate this problem, we introduce HACPO, which incorporates four tailored mechanisms to mitigate capability discrepancies and policy distribution shifts arising during collaborative policy optimization.
We provide the theoretical analysis establishing the unbiasedness of the proposed advantage estimation scheme and the validity of the resulting optimization direction under controlled heterogeneity.
Extensive experiments demonstrate that HACPO consistently and significantly improves performance across all heterogeneity types.

\section{Limitation and Future Work}
\label{sec: Limitation and Future Work}
While HACPO inherently supports $n \ge 3$ agents, current empirical evaluations are restricted to two-agent settings due to prohibitive computational resource and non-trivial infrastructure modifications required within the verl framework. Nevertheless, our extensive validation across six diverse model combinations and three distinct types of heterogeneity serves as a strong empirical proxy. The consistent, robust gains observed across these highly varied two-agent pairs indicate that HACPO's collaborative mechanisms will generalize effectively to larger agent ecosystems. Future work will address these engineering bottlenecks to empirically scale HACPO.

\newpage
\small
\bibliography{main}
\bibliographystyle{plainnat}
\medskip
\newpage

\appendix

\section{Training and Evaluation Details}
\label{sec: Training and Evalution Details}
All experiments in this paper are conducted using verl \cite{verl}. In the experiments, we set the maximum prompt length to 1024 and the maximum response length to 4096. We use the MATH dataset for training. The learning rate is set to $1 \times 10^{-6}$. For the responses generated by the trained agents in HACPO or single GSPO, we set $\epsilon_{\text{low}} = 0.0003$ and $\epsilon_{\text{high}} = 0.0004$, which is consistent with the setting mentioned in GSPO\cite{GSPO}. As for the single GRPO, we set $\epsilon_{\text{low}} = 0.2$ and $\epsilon_{\text{high}} = 0.28$, which follows the trick mentioned in DAPO \cite{yu2025dapo} and is widely used. The batch size is set to 128, with a mini-batch size of 64 and $n=8$ rollouts per prompt. In the Resource-Equivalent Baseline (GSPO$\times$2), we use a mini-batch size of 32 and $n=16$ rollouts per prompt to ensure double updates per step, while maintaining a consistent number of rollouts per update with other settings. We train for one epoch, except when examining the impact of stepwise clipping on stabilizing the training process. During evaluation, due to the high complexity of benchmarks such as AIME2025, we adopt a maximum response length of 8196 tokens in the main experiments and the ablation of Agent-Capability-Aware Advantage Estimator and Model Capability Discrepancy Coefficient (Table \ref{Main Results}, Table \ref{Additional Experimental Results}, Table \ref{Table: Ablation of Model Capabilities Discrepancy  Coefficient} and Table \ref{Ablation of Advantage Estimator}). For all other experiments, the maximum response length is kept consistent with the training configuration and is set to 4096 tokens. Our experiment is conducted on eight GPUs.

Regarding the models used in our experiments, We employed the Qwen3 \cite{yang2025qwen3} and Llama3.2 \cite{llama} series of models. In detail, Qwen3-(1.7B/4B/8B)-Base denotes the base models, while Qwen3-(1.7B/4B/8B) refer to the distilled variants obtained through strong model distillation from their corresponding base models. In addition, Qwen3-4B-Instruct is a further fine-tuned version of Qwen3-4B, designed to better follow user instructions and generate more accurate responses.

For the clipping boundary $\delta$ in the exponential importance sampling of $\alpha$, as well as the gradient clipping step size $\delta_{\text{step}}$, each experiment has slight variations. We provide the specific settings used for each experiment in the Table \ref{The Details of Hyperparameter}. A commonly used set of parameters is $\alpha = 1$, 1-$\delta = 0.8$, and $\delta_{\text{step}} = 0.025$.

\begin{table}[h!]
\caption{The Details of Hyperparameter}
\label{The Details of Hyperparameter}
\vskip 0.05in
\begin{center}
\begin{small}
\begin{tabular}{lccccccr}
\toprule
Model Combination & $\alpha$ & 1-$\delta$ & $\delta_{step}$ \\
\midrule
Qwen3-4B and Qwen3-4B-Instruct & 3.0 & 0.8 & 0.01 \\
Qwen3-1.7B-Base and Qwen3-4B-Base & 1.0 & 0.8 & 0.025 \\
Qwen3-4B-Base and Qwen3-8B-Base   & 3.0 & 0.8 & 0.025 \\
Llama3.2-1B-Instruct and Llama3.2-3B-Instruct & 1.0 & 0.9 & 0.01 \\
Qwen3-1.7B-Base and Llama3.2-1B-Instruct & 1.0 & 0.8 & 0.025\\
Qwen3-4B-Base and Llama3.2-3B-Instruct & 1.0 & 0.8 & 0.025 \\
\bottomrule
\end{tabular}
\end{small}
\end{center}
\vskip -0.1in
\end{table}

\section{Additional Related Work}
\label{sec: related work}
\subsection{Reinforcement Learning From Verifiable Rewards}
GRPO is one of the main algorithms used in Reinforcement Learning From Verifiable Rewards, and \cite{yang2026grouprelativeadvantagebiased} provides a principled theoretical analysis of group-based advantage estimation. The primary modification of GRPO\cite{shao2024deepseekmath} involves the formation of a set of responses generated from the same prompt, within which the advantage for each response is computed. This approach eliminates the need for a critic network, thereby significantly reducing both memory and computational overhead. Several variants of GRPO \cite{yu2025dapo, dcpo, gmpo, wang2025aspo, huang2026does, liu2026automated, huang2026real} have been proposed to address issues in GRPO, the most related one is GSPO \cite{GSPO}, which improve the performance and generalization of GRPO.

GSPO replaces the token-level importance sampling ratio in GRPO with a sequence-level ratio. GSPO demonstrates greater suitability than GRPO for fine-tuning Mixture-of-Experts (MoE) models. During inference, MoE models dynamically activate different expert networks \cite{MoE}. When employing GRPO, if the current policy and the sampling policy activate different experts for a given token, the importance sampling weight for that token can become an outlier, leading to training instability. In contrast, GSPO averages the importance sampling ratio across all tokens within the response, thereby significantly enhancing stability. Importance sampling essentially acts as a weighting mechanism to diminish the gradient contributions from samples that deviate substantially from the current policy's distribution. The sequence-level importance sampling employed by GSPO proves particularly effective for MoE models with varying expert networks. This success inspires a broader consideration of measuring the deviation between a sample from other models and the current policy distribution.

In addition to the methods discussed above, a wide range of advanced techniques have been proposed in recent years to address various challenges in representation learning, model optimization, and generative modeling. These include progress in interpretable representation learning~\cite{li2025interpretable}, prompt-based structural modeling~\cite{li2025prompt}, diffusion-driven restoration~\cite{li2025ld}, efficient transformer architectures for visual modeling~\cite{fu2022sparsett}, prompt-guided sequence modeling~\cite{cai2023learning,cai2024hiptrack}, parameter-efficient tuning strategies~\cite{cai2025spmtrack}, as well as novel normalization mechanisms for improving model stability~\cite{cai2025seednorm}.
Although these works are designed for different task scenarios, they collectively enrich the toolkit of modern machine learning research and provide useful insights for understanding the generalization and optimization of neural models.

\begin{tcolorbox}[colback=white, colframe=black, arc=1mm, boxrule=0.5pt]
Traditional RLVR methods like GRPO and GSPO optimize agents independently, often leading to costly on-policy sampling and underutilized intermediate rollouts. \textbf{HACPO} builds upon these group-based paradigms by enabling cross-agent rollout sharing. It maximizes sample utilization by allowing each rollout in an $n$-agent system to be leveraged up to $n$ times, directly addressing the efficiency bottlenecks of isolated RLVR training.
\end{tcolorbox}
\subsection{Multi-Agent Reinforcement Learning (MARL)}
Multi-Agent Reinforcement Learning (MARL) represents a paradigm in Reinforcement Learning (RL), where multiple agents evolve collectively \cite{MARL, happo, mappo, harl, qmix, COMA}.  MARL has gradually been applied to LLM-based agent scenarios. Most works in MARL focus on employing multiple agents to build a comprehensive system, where the agents collaborate to accomplish tasks \cite{MARFT, MAPoRL, rema, magrpo, metaagents, llm_debate}.  These works primarily focus on constructing a holistic system in which agents collaborate to accomplish tasks. In contrast, our work targets scenarios in which multiple agents are required to perform tasks independently. Although these works address different settings compared to ours, they still provide valuable inspiration: even when using only the output text as an input prompt, different models can learn from each other. The model's sampling not only includes the generated text but also the corresponding probability distribution information. By directly utilizing these samples for policy updates, rather than as inputs, the model can more effectively learn the knowledge of other models.

Several works have used MARL frameworks to fine-tune models. For example, in COPY \cite{copy}, two copies of the same model are assigned as the pioneer and the observer, respectively, with the input of the pioneer serving as the output of the observer. The roles are then exchanged to further facilitate knowledge transfer. However, homogeneous models struggle to transcend their intrinsic performance ceilings \cite{llm_rethinking}. Besides, such fine-tuning approaches require numerous sampling iterations, leading to low utilization efficiency. Furthermore, using the same model makes it difficult to inject knowledge beyond the model’s intrinsic capabilities.

\begin{tcolorbox}[colback=white, colframe=black, arc=1mm, boxrule=0.5pt]
While MARL typically focuses on collaborative execution where multiple agents coordinate to solve a task jointly, \textbf{HACPO} introduces a distinct paradigm: independent execution with collaborative optimization. By facilitating mutual knowledge transfer during training while ensuring agents act independently at inference, HACPO bridges the gap between collective learning benefits and the practical need for autonomous agent operation.
\end{tcolorbox}
\subsection{Knowledge Distillation (KD)}
Knowledge Distillation (KD) is a widely adopted technique in the field of Large Language Models (LLMs), where a high-capacity teacher model is utilized to guide the training of a more compact student model \cite{hinton2015distilling, distillation, sanh2019distilbert}. The core mechanism involves the teacher conveying not just its final predictions but its nuanced output distribution (dark knowledge), enabling the student to mimic the teacher's internal logic and probabilistic insights \cite{hinton2015distilling, romero2014fitnets}.

Beyond traditional static methods, recent advancements have transitioned the distillation process from offline to online and on-policy settings \cite{anil2018large,on-policy_distillation, distillation, on-policy_distillation, huang2025adaptive, zhao2025redone}. These approaches allow for the dynamic transfer of knowledge, often leveraging the student's own generated trajectories to bridge the distribution gap between models. In the context of LLMs, distillation has also evolved into Black-box Distillation, where students learn from the teacher’s generated responses or chain-of-thought rationales when model weights are inaccessible \cite{hsieh2023distilling, ho2023large}.
The distinction between distillation and our approach lies in the fact that, in our method, there are no "teacher" or "student" models; instead, all models can learn from each other simultaneously. Furthermore, our approach enables models to engage in both self-exploration and learning from other models concurrently.
\begin{tcolorbox}[colback=white, colframe=black, arc=1mm, boxrule=0.5pt]
Standard Knowledge Distillation (KD) relies on a fixed, one-way path where a student mimics a stronger teacher, potentially limiting the system's ceiling. \textbf{HACPO} transcends this by treating heterogeneous agents as peer co-learners. Through Agent-Capability-Aware Advantage Estimation and bidirectional transfer, it allows even weaker models to contribute unique exploration trajectories, facilitating a mutual performance boost that self-learning or one-way distillation cannot achieve.
\end{tcolorbox}

\section{Heterogeneous Agent Importance Sampling Analysis}
\label{Heterogeneous Agent Importance Sampling Analysis}
In the reinforcement learning paradigm, importance sampling is commonly used to stabilize updates, often through a clipping mechanism. The clipping range typically centers around 1.0. For instance, in GSPO, the upper and lower bounds for clipping are set to 1.0004 and 0.9997, respectively. However, in a multi-agent setting, the importance sampling values for samples from other agents do not exhibit the same pattern and fluctuate as training progresses.

In the experiment involving Qwen3-1.7B-Base and Qwen3-4B-Base, we distinguish between self-generated responses and cross-agent responses, denoted as \(s_{t,i}^{(1,1)}\) and \(s_{t,i}^{(1,2)}\), respectively. These values represent the average importance sampling across each training step. It is important to note that while \(s_{t,i}^{(1,1)}\) remains stable and tends to stay around 1 throughout training, \(s_{t,i}^{(1,2)}\) does not follow a fixed range and fluctuates as training progresses. The results are shown in Table \ref{IS in all steps}

\begin{table}[h!]
\caption {\(s_{t,i}^{(1,1)}\) and \(s_{t,i}^{(1,2)}\) of Qwen3-1.7B-Base in all steps}
\label{IS in all steps}
\vskip 0.05in
\begin{center}
\begin{small}
\begin{tabular}{lccccccr}
\toprule
Model & mean & max & min & range\\
\midrule
$s^{homo}$ & 1.00002 & 1.00020 & 0.99960 & 0.00060 \\
$s^{hete}$ & 0.89550 & 0.93615 & 0.86198 & 0.07417 \\
\bottomrule
\end{tabular}
\end{small}
\end{center}
\vskip -0.1in
\end{table}

For self-generated responses, as the number of updates(mini batches) within a batch increases, the discrepancy between the sampling policy \( \pi_{\theta_\text{old}} \) and the current policy \( \pi_\theta \) grows, leading to an increased \(s_{t,i}^{(1,1)}\) and a higher ratio of clipped tokens. However, for cross-agent responses, the discrepancy between the current policy \( \pi^{(k)}_\theta \) and the sampling model's policy \( \pi_{\theta_\text{old}}^{(j)} \) fluctuates unpredictably, leading to a variable \(s_{t,i}^{(1,2)}\) and the ratio of clipped tokens. 

In a batch with multiple mini-batches, as the number of updates increases, self-generated responses become more heavily clipped in later mini-batches due to the growing discrepancy between the current and old policies. Therefore, the influence of cross-agent responses is likely to increase in later mini-batches, as their importance sampling values are less predictable, leading to an instability if they dominate the update.

\section{Theoretical Analysis}
\label{sec: Theoretical Analysis}
To ensure mathematical rigor, the appendix proofs use the expanded notation $R(x, y)$ for rewards and $\pi_t$ for policies at step $t$. All other symbols are consistent with the main text.
\subsection{Oracle Capability Baseline}
\label{sec:oracle_capability_ratio_concentration}

This subsection defines the oracle capability ratio using expectations over the
prompt distribution and the response distributions of the agents. It proves that
the capability-aware baseline in Eq.~\eqref{eq:capability_aware_adv} has the
same expectation as the reward of the learner agent. All expectations below are
taken jointly over prompt sampling and response sampling.

We first define the expected reward of each agent and the batch-level reward mean used by the oracle baseline.

\begin{assumption}[Finite expected reward]
\label{ass:global_reward_model}
Fix a training step $t$ and $n$ agents. Let $\mathcal D_t$ be the prompt
distribution at step $t$. For each agent $a\in\{1,\dots,n\}$, let
$\pi_t^{(a)}(\cdot\mid x)$ denote the response distribution of agent $a$ given
prompt $x$. Assume the reward has finite expectation for each agent:
\begin{equation}
\label{eq:finite_global_reward}
\mathbb E_{x\sim\mathcal D_t,\; y\sim\pi_t^{(a)}(\cdot\mid x)}
\left[|R(x,y)|\right]
<
\infty .
\end{equation}
Define the conditional prompt value and the prompt-averaged expected reward as
\begin{align}
\label{eq:conditional_prompt_value}
q_t^{(a)}(x)
&:=
\mathbb E_{y\sim\pi_t^{(a)}(\cdot\mid x)}
\left[R(x,y)\right], \\
\label{eq:global_expected_reward}
p_t^{(a)}
&:=
\mathbb E_{x\sim\mathcal D_t}
\left[q_t^{(a)}(x)\right]
=
\mathbb E_{x\sim\mathcal D_t,\; y\sim\pi_t^{(a)}(\cdot\mid x)}
\left[R(x,y)\right].
\end{align}
\end{assumption}

\begin{assumption}[Positive oracle denominators]
\label{ass:positive_oracle_denominators}
Throughout the oracle baseline construction, for every denominator agent
$j\in\{1,\dots,n\}$,
\begin{equation}
\label{eq:positive_oracle_denominators}
p_t^{(j)}>0 .
\end{equation}
\end{assumption}

\begin{definition}[Oracle capability ratio]
\label{def:oracle_capability_ratio}
For agents $k$ and $j$ with $p_t^{(j)}>0$, the oracle capability ratio is
\begin{equation}
\label{eq:oracle_capability_ratio}
\bar\omega_t^{(k,j)}
:=
\frac{p_t^{(k)}}{p_t^{(j)}}.
\end{equation}
In particular, $\bar\omega_t^{(k,k)}=1$.
\end{definition}

\begin{definition}[Oracle capability-aware baseline]
\label{def:oracle_capability_baseline}
Let the batch at step $t$ contain $B$ prompt instances
$X_{t,1},\dots,X_{t,B}$ sampled independently from $\mathcal D_t$. Conditional
on $X_{t,b}$, let
$Y_{t,b,1}^{(j)},\dots,Y_{t,b,G}^{(j)}$ be sampled independently from
$\pi_t^{(j)}(\cdot\mid X_{t,b})$. Define
\begin{equation}
\label{eq:batch_level_reward_mean}
\widetilde P_t^{(j)}
:=
\frac{1}{BG}
\sum_{b=1}^{B}
\sum_{i=1}^{G}
R\!\left(X_{t,b},Y_{t,b,i}^{(j)}\right)
\end{equation}
as the batch-level empirical mean reward of agent $j$. The oracle
capability-aware baseline for learner agent $k$ is
\begin{equation}
\label{eq:oracle_capability_baseline}
\mu_{t,\star}^{(k)}
:=
\frac{1}{n}
\sum_{j=1}^{n}
\bar\omega_t^{(k,j)}\widetilde P_t^{(j)}.
\end{equation}
\end{definition}

\begin{theorem}[Unbiased oracle capability-aware baseline]
\label{thm:oracle_unbiased_baseline}
Under Assumptions~\ref{ass:global_reward_model} and
\ref{ass:positive_oracle_denominators}, the oracle capability-aware baseline
$\mu_{t,\star}^{(k)}$ in Definition~\ref{def:oracle_capability_baseline}
satisfies
\begin{equation}
\label{eq:oracle_baseline_unbiased}
\mathbb E\!\left[\mu_{t,\star}^{(k)}\right]
=
p_t^{(k)}
=
\mathbb E_{x\sim\mathcal D_t,\; y\sim\pi_t^{(k)}(\cdot\mid x)}
\left[R(x,y)\right].
\end{equation}
\end{theorem}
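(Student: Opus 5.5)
The plan is to use linearity of expectation together with the fact that the oracle ratios $\bar\omega_t^{(k,j)}$ are deterministic constants. They depend only on the population quantities $p_t^{(k)}$ and $p_t^{(j)}$, not on the sampled batch, and they are well defined by Assumption~\ref{ass:positive_oracle_denominators}. Hence
\[
\mathbb E\!\left[\mu_{t,\star}^{(k)}\right]=\frac1n\sum_{j=1}^n \bar\omega_t^{(k,j)}\,\mathbb E\!\left[\widetilde P_t^{(j)}\right],
\]
provided each $\mathbb E[\widetilde P_t^{(j)}]$ exists and is finite. So the whole argument reduces to computing $\mathbb E[\widetilde P_t^{(j)}]$ for each agent $j$.

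\textbf{Step 1: the mean of each summand.} For each $j$, I will show $\mathbb E[\widetilde P_t^{(j)}]=p_t^{(j)}$. Fix $(b,i)$. By construction, $X_{t,b}\sim\mathcal D_t$ and, conditional on $X_{t,b}$, $Y_{t,b,i}^{(j)}\sim\pi_t^{(j)}(\cdot\mid X_{t,b})$. The tower property then gives
\[
\mathbb E\!\left[R(X_{t,b},Y_{t,b,i}^{(j)})\right]=\mathbb E\!\left[q_t^{(j)}(X_{t,b})\right]=p_t^{(j)},
\]
using \eqref{eq:conditional_prompt_value} and \eqref{eq:global_expected_reward}. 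Integrability follows from Assumption~\ref{ass:global_reward_model}: the pair $(X_{t,b},Y_{t,b,i}^{(j)})$ has exactly the joint law appearing in \eqref{eq:finite_global_reward}, so $\mathbb E|R|<\infty$. This justifies both the conditioning step (Fubini) and linearity over the $BG$ summands. Averaging these identical means gives $\mathbb E[\widetilde P_t^{(j)}]=p_t^{(j)}$. Note that independence across $b$ and $i$ is not needed for this step; only the marginal law of each pair matters.

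\textbf{Step 2: telescoping the ratios.} Substituting Step 1 into the display above gives
\[
\mathbb E\!\left[\mu_{t,\star}^{(k)}\right]=\frac1n\sum_{j=1}^n \frac{p_t^{(k)}}{p_t^{(j)}}\,p_t^{(j)}=\frac1n\sum_{j=1}^n p_t^{(k)}=p_t^{(k)},
\]
which is \eqref{eq:oracle_baseline_unbiased}. The final equality in the theorem is just the definition \eqref{eq:global_expected_reward}.

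The only delicate point is Step 1: confirming integrability, and confirming that the weights are nonrandom so they can be pulled out of the expectation. This is exactly where the oracle version differs from the empirical ratio $\omega_t^{(k,j)}$, which is random and would break the argument. Everything else is routine.
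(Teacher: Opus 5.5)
Your proposal is correct and follows essentially the same route as the paper's proof. Both establish $\mathbb E[\widetilde P_t^{(j)}]=p_t^{(j)}$, pull out the deterministic oracle weights by linearity, and cancel $p_t^{(j)}$. Your version also spells out the integrability and tower-property justifications that the paper leaves implicit.
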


\begin{proof}
By the definition of $\widetilde P_t^{(j)}$ and
Assumption~\ref{ass:global_reward_model},
\begin{equation}
\label{eq:batch_level_mean_unbiased}
\mathbb E\!\left[\widetilde P_t^{(j)}\right]
=
p_t^{(j)}.
\end{equation}
Therefore,
\begin{align}
\label{eq:oracle_baseline_unbiased_proof}
\mathbb E\!\left[\mu_{t,\star}^{(k)}\right]
&=
\frac{1}{n}
\sum_{j=1}^{n}
\bar\omega_t^{(k,j)}
\mathbb E\!\left[\widetilde P_t^{(j)}\right] \\
&=
\frac{1}{n}
\sum_{j=1}^{n}
\frac{p_t^{(k)}}{p_t^{(j)}}p_t^{(j)}
=
p_t^{(k)}.
\end{align}
\end{proof}

\begin{corollary}[Zero-mean oracle centered reward]
\label{cor:oracle_zero_mean_centered_reward}
Let $y\sim\pi_t^{(k)}(\cdot\mid x)$ with $x\sim\mathcal D_t$, and define the
oracle centered reward
\begin{equation}
\label{eq:oracle_centered_reward}
\bar A_{t,\star}^{(k)}(x,y)
:=
R(x,y)-\mu_{t,\star}^{(k)}.
\end{equation}
Under the conditions of Theorem~\ref{thm:oracle_unbiased_baseline},
\begin{equation}
\label{eq:oracle_centered_reward_zero}
\mathbb E\!\left[\bar A_{t,\star}^{(k)}(x,y)\right]=0.
\end{equation}
\end{corollary}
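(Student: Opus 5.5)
The plan is to use linearity of expectation, splitting the centered reward into its reward part and its baseline part:
\begin{equation*}
\mathbb E\!\left[\bar A_{t,\star}^{(k)}(x,y)\right]
=
\mathbb E\!\left[R(x,y)\right]-\mathbb E\!\left[\mu_{t,\star}^{(k)}\right].
\end{equation*}
Both terms are well-defined and finite. The reward term is finite by Assumption~\ref{ass:global_reward_model}. The baseline is a finite linear combination, with deterministic coefficients $\bar\omega_t^{(k,j)}$, of the batch means $\widetilde P_t^{(j)}$. Each batch mean is integrable by that assumption, and each coefficient is finite because $p_t^{(j)}>0$ by Assumption~\ref{ass:positive_oracle_denominators}. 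The splitting is therefore legitimate.

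For the first term, $x\sim\mathcal D_t$ and $y\sim\pi_t^{(k)}(\cdot\mid x)$. The tower property with the conditional prompt value $q_t^{(k)}(x)$ gives $\mathbb E[R(x,y)]=\mathbb E_{x}[q_t^{(k)}(x)]=p_t^{(k)}$, which is exactly the definition of $p_t^{(k)}$. For the second term, I will invoke Theorem~\ref{thm:oracle_unbiased_baseline} directly, which gives $\mathbb E[\mu_{t,\star}^{(k)}]=p_t^{(k)}$. Subtracting the two yields zero.

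The only point requiring care is interpreting the joint expectation when $(x,y)$ may be one of the batch samples $(X_{t,b},Y^{(k)}_{t,b,i})$ rather than an independent draw. Linearity does not require independence, so dependence between $y$ and $\mu_{t,\star}^{(k)}$ is harmless. The marginal law of the pair $(x,y)$ is still $\mathcal D_t\otimes\pi_t^{(k)}$ in either case, so the first term equals $p_t^{(k)}$ regardless. I expect no substantive obstacle beyond this remark.
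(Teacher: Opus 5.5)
Your proposal is correct and matches the paper's proof: linearity of expectation splits $\mathbb E[\bar A_{t,\star}^{(k)}(x,y)]$ into $p_t^{(k)}-\mathbb E[\mu_{t,\star}^{(k)}]$, and Theorem~\ref{thm:oracle_unbiased_baseline} makes the second term equal to $p_t^{(k)}$. Your remarks on integrability and on possible dependence between $(x,y)$ and the batch are sound details that the paper leaves implicit.
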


\begin{proof}
By linearity of expectation,
\begin{equation}
\label{eq:oracle_centered_reward_zero_proof}
\mathbb E\!\left[\bar A_{t,\star}^{(k)}(x,y)\right]
=
\mathbb E_{x\sim\mathcal D_t,\; y\sim\pi_t^{(k)}(\cdot\mid x)}
\left[R(x,y)\right]
-
\mathbb E\!\left[\mu_{t,\star}^{(k)}\right].
\end{equation}
Theorem~\ref{thm:oracle_unbiased_baseline} shows that the two terms are equal,
so the difference is zero.
\end{proof}

\begin{remark}[Single-agent degeneration]
\label{rem:single_agent_degeneration}
When $n=1$, the oracle ratio and baseline become
\begin{equation}
\label{eq:single_agent_oracle_degenerate}
\bar\omega_t^{(k,k)}=1,
\qquad
\mu_{t,\star}^{(k)}=\widetilde P_t^{(k)}.
\end{equation}
Thus the multi-agent statement reduces to the standard single-agent
sample-mean baseline.
\end{remark}

\subsection{Empirical Ratio Concentration}
\label{sec: Empirical Ratio Concentration}
The next result bounds the deviation between the batch-level empirical ratio
and the oracle ratio.

\begin{assumption}[Bounded batch-level sampling]
\label{ass:bounded_batch_level_sampling}
In addition to Assumption~\ref{ass:global_reward_model}, assume the reward is
bounded:
\begin{equation}
\label{eq:bounded_reward}
0\le R(x,y)\le 1.
\end{equation}
Let $B$ be the number of prompts in the batch at step $t$, and let $G$ be the
number of responses sampled for each prompt and agent. Define
\begin{equation}
\label{eq:batch_level_sample_size}
N_{\mathrm{batch}}:=BG .
\end{equation}
The batch at step $t$ is sampled under the prompt distribution and agent
policies in Assumption~\ref{ass:global_reward_model}. For each prompt instance
$b\in\{1,\dots,B\}$, let $X_b\sim\mathcal D_t$ be sampled independently.
Conditional on $X_b$, responses
$Y_{b,1}^{(a)},\dots,Y_{b,G}^{(a)}$ are sampled independently from
$\pi_t^{(a)}(\cdot\mid X_b)$ for each agent $a$, and different prompt groups are
independent. The observed reward samples are
\begin{equation}
\label{eq:batch_reward_sample}
Z_{b,g}^{(a)}
:=
R\!\left(X_b,Y_{b,g}^{(a)}\right),
\qquad
b=1,\dots,B,\quad g=1,\dots,G .
\end{equation}
\end{assumption}

\begin{definition}[Batch-level empirical capability ratio]
\label{def:empirical_capability_ratio}
Using the batch-level samples in Eq.~\eqref{eq:batch_reward_sample}, define
\begin{equation}
\label{eq:empirical_capability_mean_ratio}
\hat P_t^{(a)}
:=
\frac{1}{N_{\mathrm{batch}}}
\sum_{b=1}^{B}
\sum_{g=1}^{G}
Z_{b,g}^{(a)},
\qquad
\hat\omega_t^{(k,j)}
:=
\frac{\hat P_t^{(k)}}{\hat P_t^{(j)}}.
\end{equation}
\end{definition}

\begin{lemma}[Batch-level concentration of empirical capabilities]
\label{lemma:capability_concentration}
Under Assumption~\ref{ass:bounded_batch_level_sampling}, for any
$\delta\in(0,1)$, with probability at least $1-\delta$, all agents
simultaneously satisfy
\begin{equation}
\label{eq:uniform_capability_concentration}
\left|\hat P_t^{(a)}-p_t^{(a)}\right|
\le
\epsilon_{\mathrm{batch}}(\delta),
\qquad
\text{for all } a=1,\dots,n,
\end{equation}
where
\begin{equation}
\label{eq:batch_level_epsilon}
\epsilon_{\mathrm{batch}}(\delta)
:=
\epsilon_{\mathrm{prompt}}(\delta)
+
\epsilon_{\mathrm{resp}}(\delta),
\end{equation}
with
\begin{align}
\label{eq:prompt_concentration_radius}
\epsilon_{\mathrm{prompt}}(\delta)
&:=
\sqrt{\frac{\log(4n/\delta)}{2B}}, \\
\label{eq:response_concentration_radius}
\epsilon_{\mathrm{resp}}(\delta)
&:=
\sqrt{\frac{\log(4n/\delta)}{2BG}} .
\end{align}
\end{lemma}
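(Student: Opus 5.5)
The plan is to split the deviation of $\hat P_t^{(a)}$ from $p_t^{(a)}$ into a prompt-sampling part and a response-sampling part. Each part gets its own Hoeffding bound, and a union bound over the $2n$ events resulting from the two parts and $n$ agents finishes the argument. The radius $\log(4n/\delta)$ suggests exactly this: $2n$ two-sided events, each at level $\delta/(2n)$. For fixed agent $a$ we write
\begin{equation*}
\hat P_t^{(a)}-p_t^{(a)}
=
\Bigl(\frac1B\sum_{b=1}^B q_t^{(a)}(X_b)-p_t^{(a)}\Bigr)
+
\frac{1}{BG}\sum_{b=1}^B\sum_{g=1}^G\Bigl(Z_{b,g}^{(a)}-q_t^{(a)}(X_b)\Bigr),
\end{equation*}
where $q_t^{(a)}$ is the conditional prompt value from Assumption~\ref{ass:global_reward_model}.

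\textbf{Prompt term.} The variables $q_t^{(a)}(X_b)$, $b=1,\dots,B$, are i.i.d. because the prompts are sampled independently. They take values in $[0,1]$ because $0\le R\le 1$ by Assumption~\ref{ass:bounded_batch_level_sampling}, and their mean is $p_t^{(a)}$. The two-sided Hoeffding inequality gives
\begin{equation*}
\Pr\bigl(|\text{first term}|>\epsilon_{\mathrm{prompt}}(\delta)\bigr)
\le 2\exp\bigl(-2B\epsilon_{\mathrm{prompt}}(\delta)^2\bigr)
= \delta/(2n).
\end{equation*}

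\textbf{Response term.} Here we condition on $X_1,\dots,X_B$. Given the prompts, the $BG$ variables $Z_{b,g}^{(a)}-q_t^{(a)}(X_b)$ are independent: responses within a group are conditionally i.i.d. and different groups are independent. Each has conditional mean zero and lies in an interval of length $1$, namely $[-q_t^{(a)}(X_b),\,1-q_t^{(a)}(X_b)]$. Conditional Hoeffding therefore gives
\begin{equation*}
\Pr\bigl(|\text{second term}|>\epsilon_{\mathrm{resp}}(\delta)\,\big|\,X_{1:B}\bigr)
\le 2\exp\bigl(-2BG\,\epsilon_{\mathrm{resp}}(\delta)^2\bigr)
=\delta/(2n).
\end{equation*}
This bound does not depend on the prompts, so taking expectation over $X_{1:B}$ yields the same unconditional bound.

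\textbf{Conclusion.} On the complement of both bad events, the triangle inequality gives $|\hat P_t^{(a)}-p_t^{(a)}|\le\epsilon_{\mathrm{prompt}}(\delta)+\epsilon_{\mathrm{resp}}(\delta)=\epsilon_{\mathrm{batch}}(\delta)$. A union bound over the two events and the $n$ agents gives total failure probability at most $2n\cdot\delta/(2n)=\delta$, which is the claim.

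The step requiring care is the response term. The responses are independent only conditionally on the prompts, and their per-group means $q_t^{(a)}(X_b)$ are random. Conditioning on the prompts, rather than applying Hoeffding to the raw $Z$'s, handles this dependence and is what produces the $1/\sqrt{BG}$ rate. Checking that the prompt-independent conditional bound integrates cleanly to an unconditional one should be routine.
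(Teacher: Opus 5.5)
Your proposal is correct and follows the paper's proof essentially step for step. It uses the same decomposition through the empirical prompt-value average $\frac{1}{B}\sum_{b} q_t^{(a)}(X_b)$, unconditional Hoeffding for the prompt term, conditional Hoeffding integrated over the prompts for the response term, and a union bound over the $2n$ events at level $\delta/(2n)$ each.
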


\begin{proof}
For a fixed agent $a$, define the empirical prompt-value average
\begin{equation}
\label{eq:empirical_prompt_value_average}
\bar q_{t,\mathrm{batch}}^{(a)}
:=
\frac{1}{B}
\sum_{b=1}^{B}
q_t^{(a)}(X_b).
\end{equation}
Then
\begin{equation}
\label{eq:batch_level_error_decomposition}
\hat P_t^{(a)}-p_t^{(a)}
=
\left(
\hat P_t^{(a)}-\bar q_{t,\mathrm{batch}}^{(a)}
\right)
+
\left(
\bar q_{t,\mathrm{batch}}^{(a)}-p_t^{(a)}
\right).
\end{equation}
Since $q_t^{(a)}(X_b)\in[0,1]$ and the prompt instances are independent,
Hoeffding's inequality gives
\begin{equation}
\label{eq:prompt_hoeffding}
\Pr\!\left(
\left|
\bar q_{t,\mathrm{batch}}^{(a)}-p_t^{(a)}
\right|
\ge
\epsilon
\right)
\le
2\exp(-2B\epsilon^2).
\end{equation}
Conditional on the prompt instances, the rewards $Z_{b,g}^{(a)}$ are
independent and bounded in $[0,1]$, with conditional means
$q_t^{(a)}(X_b)$. Therefore,
\begin{equation}
\label{eq:response_hoeffding_conditional}
\Pr\!\left(
\left|
\hat P_t^{(a)}-\bar q_{t,\mathrm{batch}}^{(a)}
\right|
\ge
\epsilon
\;\middle|\;
X_1,\dots,X_B
\right)
\le
2\exp(-2BG\epsilon^2).
\end{equation}
Taking expectation over the prompt instances gives the unconditional bound
\begin{equation}
\label{eq:response_hoeffding_unconditional}
\Pr\!\left(
\left|
\hat P_t^{(a)}-\bar q_{t,\mathrm{batch}}^{(a)}
\right|
\ge
\epsilon
\right)
\le
2\exp(-2BG\epsilon^2).
\end{equation}
Choose $\epsilon_{\mathrm{prompt}}(\delta)$ and
$\epsilon_{\mathrm{resp}}(\delta)$ as in
Eqs.~\eqref{eq:prompt_concentration_radius}--\eqref{eq:response_concentration_radius}.
For each agent, the prompt and response failure probabilities are each at most
$\delta/(2n)$. Applying the union bound over both terms and over the $n$ agents
gives Eq.~\eqref{eq:uniform_capability_concentration}.
\end{proof}

\begin{theorem}[High-probability bound for the empirical capability ratio]
\label{thm:empirical_ratio_bound}
Suppose Assumption~\ref{ass:bounded_batch_level_sampling} holds. Let
$\epsilon_{\mathrm{batch}}=\epsilon_{\mathrm{batch}}(\delta)$ be defined as in
Eq.~\eqref{eq:batch_level_epsilon}, and assume
$\epsilon_{\mathrm{batch}}< p_t^{(j)}$. Then with probability at least $1-\delta$,
\begin{equation}
\label{eq:capability_ratio_bound}
\frac{p_t^{(k)}-\epsilon_{\mathrm{batch}}}{p_t^{(j)}+\epsilon_{\mathrm{batch}}}
\le
\hat\omega_t^{(k,j)}
\le
\frac{p_t^{(k)}+\epsilon_{\mathrm{batch}}}{p_t^{(j)}-\epsilon_{\mathrm{batch}}}.
\end{equation}
Moreover,
\begin{equation}
\label{eq:capability_ratio_deviation_bound}
\left|
\hat\omega_t^{(k,j)}
-
\bar\omega_t^{(k,j)}
\right|
\le
\frac{\epsilon_{\mathrm{batch}}\left(p_t^{(j)}+p_t^{(k)}\right)}
{p_t^{(j)}\left(p_t^{(j)}-\epsilon_{\mathrm{batch}}\right)}.
\end{equation}
If, in addition, there exists a constant $\gamma>0$ such that
$p_t^{(j)}\ge\gamma$ and $\epsilon_{\mathrm{batch}}\le \gamma/2$, then
\begin{equation}
\label{eq:capability_ratio_gamma_bound}
\left|
\hat\omega_t^{(k,j)}
-
\bar\omega_t^{(k,j)}
\right|
\le
\frac{4}{\gamma^2}
\left(
\sqrt{\frac{\log(4n/\delta)}{2B}}
+
\sqrt{\frac{\log(4n/\delta)}{2BG}}
\right).
\end{equation}
\end{theorem}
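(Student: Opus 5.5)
The plan is to work entirely on the high-probability event supplied by Lemma~\ref{lemma:capability_concentration}. Let $\mathcal E$ denote the event that $|\hat P_t^{(a)}-p_t^{(a)}|\le\epsilon_{\mathrm{batch}}$ holds simultaneously for all $a\in\{1,\dots,n\}$; by the lemma, $\Pr(\mathcal E)\ge 1-\delta$. Every subsequent step will be a deterministic inequality valid on $\mathcal E$, so all three conclusions inherit the probability $1-\delta$ with no further union bound. Writing $\epsilon=\epsilon_{\mathrm{batch}}$, the key consequence of the hypothesis $\epsilon<p_t^{(j)}$ is that on $\mathcal E$,
\begin{equation*}
\hat P_t^{(j)}\ge p_t^{(j)}-\epsilon>0 .
\end{equation*}
Hence $\hat\omega_t^{(k,j)}$ is well defined and its denominator has a known sign.

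For the two-sided bound \eqref{eq:capability_ratio_bound}, I will use the enclosures
\begin{equation*}
p_t^{(k)}-\epsilon\le\hat P_t^{(k)}\le p_t^{(k)}+\epsilon,
\qquad
0<p_t^{(j)}-\epsilon\le\hat P_t^{(j)}\le p_t^{(j)}+\epsilon .
\end{equation*}
The upper bound then follows by taking the largest numerator over the smallest positive denominator. For the lower bound there are two cases:
\begin{itemize}
\item If $p_t^{(k)}-\epsilon\ge 0$, take the smallest numerator over the largest denominator.
\item If $p_t^{(k)}-\epsilon<0$, the claimed lower bound is negative, while $\hat\omega_t^{(k,j)}\ge 0$ because rewards are nonnegative, so it holds trivially.
\end{itemize}
This sign case is the only subtle point in this part.

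For the deviation bound \eqref{eq:capability_ratio_deviation_bound}, I will write the difference over a common denominator and add and subtract $p_t^{(j)}p_t^{(k)}$:
\begin{equation*}
\hat\omega_t^{(k,j)}-\bar\omega_t^{(k,j)}
=\frac{\hat P_t^{(k)}p_t^{(j)}-\hat P_t^{(j)}p_t^{(k)}}{\hat P_t^{(j)}p_t^{(j)}}
=\frac{(\hat P_t^{(k)}-p_t^{(k)})p_t^{(j)}-(\hat P_t^{(j)}-p_t^{(j)})p_t^{(k)}}{\hat P_t^{(j)}p_t^{(j)}} .
\end{equation*}
The triangle inequality on $\mathcal E$ bounds the numerator by $\epsilon\,(p_t^{(j)}+p_t^{(k)})$, using $p\ge 0$. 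The denominator is bounded below by $(p_t^{(j)}-\epsilon)\,p_t^{(j)}$, which gives \eqref{eq:capability_ratio_deviation_bound} exactly.

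For \eqref{eq:capability_ratio_gamma_bound}, I will specialize the previous bound using three facts:
\begin{itemize}
\item $p_t^{(j)}+p_t^{(k)}\le 2$, since $0\le R\le 1$ forces each $p_t^{(a)}\in[0,1]$;
\item $p_t^{(j)}\ge\gamma$;
\item $p_t^{(j)}-\epsilon\ge\gamma-\gamma/2=\gamma/2$.
\end{itemize}
Together these give the bound $2\epsilon/(\gamma\cdot\gamma/2)=4\epsilon/\gamma^2$. Substituting the definition \eqref{eq:batch_level_epsilon}, together with \eqref{eq:prompt_concentration_radius} and \eqref{eq:response_concentration_radius}, yields the stated expression.

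I do not expect a real obstacle. The one place requiring care is making sure $\hat P_t^{(j)}$ is strictly positive, so that the ratio is defined and the direction of the inequalities is preserved; this is exactly what the assumption $\epsilon_{\mathrm{batch}}<p_t^{(j)}$ buys on $\mathcal E$. The other point needing attention is the possibly negative lower endpoint in \eqref{eq:capability_ratio_bound}, handled by the sign case above.
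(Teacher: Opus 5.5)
Your proposal is correct and follows essentially the same route as the paper. Both arguments work on the event of Lemma~\ref{lemma:capability_concentration}, use $\epsilon_{\mathrm{batch}}<p_t^{(j)}$ to make $\hat P_t^{(j)}$ positive, bound the ratio via interval enclosures, write the deviation over a common denominator with the triangle inequality, and finish with $p_t^{(j)}-\epsilon_{\mathrm{batch}}\ge\gamma/2$ and $p_t^{(j)}+p_t^{(k)}\le 2$. Your sign case for the lower endpoint (when $p_t^{(k)}<\epsilon_{\mathrm{batch}}$, using $\hat\omega_t^{(k,j)}\ge 0$) adds care the paper's one-line ``therefore'' lacks, since the smallest-numerator-over-largest-denominator step gives the wrong direction for a negative numerator.
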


\begin{proof}
On the event from Lemma~\ref{lemma:capability_concentration}, we have
\begin{equation}
\label{eq:capability_mean_confidence_interval}
p_t^{(a)}-\epsilon_{\mathrm{batch}}
\le
\hat P_t^{(a)}
\le
p_t^{(a)}+\epsilon_{\mathrm{batch}}
\end{equation}
for every agent $a$. Since $\epsilon_{\mathrm{batch}}<p_t^{(j)}$, the
denominator $\hat P_t^{(j)}$ is positive. Therefore,
\begin{equation}
\label{eq:capability_ratio_bound_proof}
\frac{p_t^{(k)}-\epsilon_{\mathrm{batch}}}{p_t^{(j)}+\epsilon_{\mathrm{batch}}}
\le
\frac{\hat P_t^{(k)}}{\hat P_t^{(j)}}
\le
\frac{p_t^{(k)}+\epsilon_{\mathrm{batch}}}{p_t^{(j)}-\epsilon_{\mathrm{batch}}}.
\end{equation}
This proves Eq.~\eqref{eq:capability_ratio_bound}.

For the deviation bound, write
\begin{align}
\label{eq:ratio_deviation_algebra}
\left|
\frac{\hat P_t^{(k)}}{\hat P_t^{(j)}}
-
\frac{p_t^{(k)}}{p_t^{(j)}}
\right|
&=
\left|
\frac{\hat P_t^{(k)}p_t^{(j)}
-p_t^{(k)}\hat P_t^{(j)}}
{\hat P_t^{(j)}p_t^{(j)}}
\right| \nonumber \\
&\le
\frac{
p_t^{(j)}\left|\hat P_t^{(k)}-p_t^{(k)}\right|
+p_t^{(k)}\left|\hat P_t^{(j)}-p_t^{(j)}\right|
}
{\hat P_t^{(j)}p_t^{(j)}} \nonumber \\
&\le
\frac{\epsilon_{\mathrm{batch}}\left(p_t^{(j)}+p_t^{(k)}\right)}
{p_t^{(j)}\left(p_t^{(j)}-\epsilon_{\mathrm{batch}}\right)}.
\end{align}
If $p_t^{(j)}\ge\gamma$ and $\epsilon_{\mathrm{batch}}\le\gamma/2$, then
\begin{equation}
\label{eq:denominator_gamma_lower_bound}
p_t^{(j)}-\epsilon_{\mathrm{batch}}\ge\gamma/2.
\end{equation}
Since rewards lie in $[0,1]$, we also have
\begin{equation}
\label{eq:bounded_expected_reward_sum}
p_t^{(j)}+p_t^{(k)}\le 2.
\end{equation}
Combining Eqs.~\eqref{eq:ratio_deviation_algebra}--\eqref{eq:bounded_expected_reward_sum}
with Eq.~\eqref{eq:batch_level_epsilon} proves Eq.~\eqref{eq:capability_ratio_gamma_bound}.
\end{proof}

\begin{definition}[Empirical capability-aware baseline]
\label{def:empirical_global_baseline}
Replacing the oracle ratio in Definition~\ref{def:oracle_capability_baseline}
with the batch-level empirical ratio gives
\begin{equation}
\label{eq:empirical_global_baseline}
\hat\mu_{t,\mathrm{emp}}^{(k)}
:=
\frac{1}{n}
\sum_{j=1}^{n}
\hat\omega_t^{(k,j)}\widetilde P_t^{(j)}.
\end{equation}
\end{definition}

\begin{corollary}[Baseline error induced by empirical ratios]
\label{cor:empirical_ratio_baseline_error}
Under the event in Lemma~\ref{lemma:capability_concentration}, suppose
$\epsilon_{\mathrm{batch}}<p_t^{(j)}$ for every denominator agent $j$. Then
\begin{equation}
\label{eq:empirical_baseline_error_bound}
\left|
\hat\mu_{t,\mathrm{emp}}^{(k)}
-
\mu_{t,\star}^{(k)}
\right|
\le
\frac{1}{n}
\sum_{j=1}^{n}
\widetilde P_t^{(j)}
\frac{\epsilon_{\mathrm{batch}}\left(p_t^{(j)}+p_t^{(k)}\right)}
{p_t^{(j)}\left(p_t^{(j)}-\epsilon_{\mathrm{batch}}\right)}.
\end{equation}
Since $\widetilde P_t^{(j)}\in[0,1]$, the same bound also holds after dropping
the multiplicative factor $\widetilde P_t^{(j)}$ from each summand.
\end{corollary}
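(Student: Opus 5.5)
The plan is to subtract the two baselines term by term and then use the ratio deviation bound from Theorem~\ref{thm:empirical_ratio_bound}. Both $\hat\mu_{t,\mathrm{emp}}^{(k)}$ and $\mu_{t,\star}^{(k)}$ are averages over $j$ with the same multiplier $\widetilde P_t^{(j)}$, so
\begin{equation*}
\hat\mu_{t,\mathrm{emp}}^{(k)}-\mu_{t,\star}^{(k)}
=\frac{1}{n}\sum_{j=1}^{n}\widetilde P_t^{(j)}\left(\hat\omega_t^{(k,j)}-\bar\omega_t^{(k,j)}\right).
\end{equation*}
The triangle inequality then gives $\frac{1}{n}\sum_j |\widetilde P_t^{(j)}|\,|\hat\omega_t^{(k,j)}-\bar\omega_t^{(k,j)}|$. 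Rewards lie in $[0,1]$ by Assumption~\ref{ass:bounded_batch_level_sampling}, so $\widetilde P_t^{(j)}\ge 0$ and the absolute value on it can be dropped.

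Next, on the event of Lemma~\ref{lemma:capability_concentration}, I bound each factor $|\hat\omega_t^{(k,j)}-\bar\omega_t^{(k,j)}|$ by Eq.~\eqref{eq:capability_ratio_deviation_bound}. That estimate is a deterministic consequence of the uniform confidence intervals Eq.~\eqref{eq:capability_mean_confidence_interval} together with $\epsilon_{\mathrm{batch}}<p_t^{(j)}$. Since the hypothesis assumes $\epsilon_{\mathrm{batch}}<p_t^{(j)}$ for every $j$, the bound holds for all $j$ at once. Substituting it term by term gives Eq.~\eqref{eq:empirical_baseline_error_bound}.

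The one point needing care is that the same event must be used for every $j$. A separate application of Theorem~\ref{thm:empirical_ratio_bound} for each $j$, each with failure probability $\delta$, would cost a further union bound. Lemma~\ref{lemma:capability_concentration} already controls all agents simultaneously, so the algebraic part of the proof of Theorem~\ref{thm:empirical_ratio_bound} applies to every $j$ on that single event, with no extra probability loss.

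The final remark, dropping the multiplier, follows because $\widetilde P_t^{(j)}\in[0,1]$ is an average of rewards in $[0,1]$, and each summand is nonnegative. I do not expect a real obstacle. The only thing to verify is the sign and positivity conditions (nonnegative $\widetilde P_t^{(j)}$ and a positive denominator $p_t^{(j)}-\epsilon_{\mathrm{batch}}$), which make the termwise bounds legitimate.
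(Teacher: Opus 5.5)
Your proposal is correct and takes the same approach as the paper. Like the paper, you write the difference as $\frac{1}{n}\sum_{j}\widetilde P_t^{(j)}\bigl(\hat\omega_t^{(k,j)}-\bar\omega_t^{(k,j)}\bigr)$, apply the triangle inequality, and bound each ratio deviation by Eq.~\eqref{eq:capability_ratio_deviation_bound}. Your added observation is a useful clarification: the deterministic algebra from the proof of Theorem~\ref{thm:empirical_ratio_bound} runs on the single uniform event of Lemma~\ref{lemma:capability_concentration}, so there is no extra union bound over $j$ — a point the paper leaves implicit.
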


\begin{proof}
By Definitions~\ref{def:oracle_capability_baseline} and
\ref{def:empirical_global_baseline},
\begin{align}
\label{eq:empirical_baseline_error_proof}
\left|
\hat\mu_{t,\mathrm{emp}}^{(k)}
-
\mu_{t,\star}^{(k)}
\right|
&=
\left|
\frac{1}{n}
\sum_{j=1}^{n}
\left(\hat\omega_t^{(k,j)}-\bar\omega_t^{(k,j)}\right)
\widetilde P_t^{(j)}
\right| \nonumber \\
&\le
\frac{1}{n}
\sum_{j=1}^{n}
\widetilde P_t^{(j)}
\left|
\hat\omega_t^{(k,j)}-\bar\omega_t^{(k,j)}
\right|.
\end{align}
Applying Theorem~\ref{thm:empirical_ratio_bound} to each ratio gives
Eq.~\eqref{eq:empirical_baseline_error_bound}.
\end{proof}

\begin{corollary}[Combined empirical baseline error]
\label{cor:combined_empirical_baseline_error}
For any $\delta\in(0,1)$, define
\begin{equation}
\label{eq:delta_split_batch_radius}
\epsilon_{\mathrm{batch}}^\delta
:=
\epsilon_{\mathrm{batch}}(\delta/2).
\end{equation}
Under Assumption~\ref{ass:bounded_batch_level_sampling}, suppose there exists
$\gamma>0$ such that $p_t^{(j)}\ge\gamma$ for every denominator agent $j$. If
$\epsilon_{\mathrm{batch}}^\delta\le \gamma/2$, then with probability at least
$1-\delta$,
\begin{equation}
\label{eq:combined_empirical_baseline_error}
\left|
\hat\mu_{t,\mathrm{emp}}^{(k)}
-
p_t^{(k)}
\right|
\le
\left(
\frac{4}{\gamma^2}
+
\frac{1}{\gamma}
\right)
\epsilon_{\mathrm{batch}}^\delta .
\end{equation}
\end{corollary}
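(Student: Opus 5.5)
The plan is to split the error with the triangle inequality, using the oracle baseline $\mu_{t,\star}^{(k)}$ as the intermediate point:
\begin{equation*}
\left|\hat\mu_{t,\mathrm{emp}}^{(k)}-p_t^{(k)}\right|
\le
\left|\hat\mu_{t,\mathrm{emp}}^{(k)}-\mu_{t,\star}^{(k)}\right|
+
\left|\mu_{t,\star}^{(k)}-p_t^{(k)}\right| .
\end{equation*}
I will bound the first term (ratio-estimation error) by $\frac{4}{\gamma^2}\epsilon_{\mathrm{batch}}^\delta$ and the second term (fluctuation of the batch means $\widetilde P_t^{(j)}$ around their expectations) by $\frac{1}{\gamma}\epsilon_{\mathrm{batch}}^\delta$. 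Each bound will hold on a good event of probability at least $1-\delta/2$, and a union bound then gives overall probability at least $1-\delta$. This is why the radius is evaluated at $\delta/2$.

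\emph{First event.} Apply Lemma~\ref{lemma:capability_concentration} at confidence level $\delta/2$ to the samples defining $\hat P_t^{(a)}$. This gives $|\hat P_t^{(a)}-p_t^{(a)}|\le\epsilon_{\mathrm{batch}}^\delta$ for all $a$. Since $\epsilon_{\mathrm{batch}}^\delta\le\gamma/2<\gamma\le p_t^{(j)}$, the hypotheses of Corollary~\ref{cor:empirical_ratio_baseline_error} and of the $\gamma$-part of Theorem~\ref{thm:empirical_ratio_bound} hold with $\epsilon_{\mathrm{batch}}$ replaced by $\epsilon_{\mathrm{batch}}^\delta$. The proofs there only use the event inequalities, so they go through for any radius. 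Each summand in Eq.~\eqref{eq:empirical_baseline_error_bound} (after dropping $\widetilde P_t^{(j)}\le 1$) then satisfies
\begin{equation*}
\frac{\epsilon_{\mathrm{batch}}^\delta\,(p_t^{(j)}+p_t^{(k)})}{p_t^{(j)}(p_t^{(j)}-\epsilon_{\mathrm{batch}}^\delta)}
\le
\frac{2\epsilon_{\mathrm{batch}}^\delta}{\gamma\cdot\gamma/2}
=
\frac{4}{\gamma^2}\epsilon_{\mathrm{batch}}^\delta .
\end{equation*}
Averaging over $j$ gives the first bound.

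\emph{Second event.} Write the second term as
\begin{equation*}
\mu_{t,\star}^{(k)}-p_t^{(k)}
=
\frac{1}{n}\sum_{j=1}^n\frac{p_t^{(k)}}{p_t^{(j)}}\left(\widetilde P_t^{(j)}-p_t^{(j)}\right).
\end{equation*}
This uses $\frac1n\sum_j \bar\omega_t^{(k,j)}p_t^{(j)}=p_t^{(k)}$, the identity from Theorem~\ref{thm:oracle_unbiased_baseline}. The quantities $\widetilde P_t^{(j)}$ are batch means of $[0,1]$-valued rewards with the same structure as $\hat P_t^{(j)}$. So the argument of Lemma~\ref{lemma:capability_concentration} (Hoeffding over prompts and over responses, plus a union bound over agents) applied at level $\delta/2$ gives $|\widetilde P_t^{(j)}-p_t^{(j)}|\le\epsilon_{\mathrm{batch}}^\delta$ for all $j$ with probability at least $1-\delta/2$. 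If $\widetilde P$ and $\hat P$ are computed from the same samples, this is literally the same event and the union bound is only conservative. Using $p_t^{(k)}\le 1$ and $p_t^{(j)}\ge\gamma$, each coefficient is at most $1/\gamma$. Hence the second term is at most $\frac{1}{\gamma}\epsilon_{\mathrm{batch}}^\delta$.

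Intersecting the two events and adding the bounds yields Eq.~\eqref{eq:combined_empirical_baseline_error}. The only delicate point is bookkeeping rather than analysis. One must check that the earlier results, which are stated with the radius $\epsilon_{\mathrm{batch}}(\delta)$, are legitimately invoked with $\epsilon_{\mathrm{batch}}(\delta/2)$ and the corresponding event. One must also check that the second concentration statement is justified for $\widetilde P_t^{(j)}$, which is defined in Definition~\ref{def:oracle_capability_baseline} rather than through Assumption~\ref{ass:bounded_batch_level_sampling}. I would handle this by noting that both batch means satisfy the sampling model of Assumption~\ref{ass:bounded_batch_level_sampling}, so Lemma~\ref{lemma:capability_concentration} applies verbatim.
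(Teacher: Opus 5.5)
Your proposal is correct and follows the paper's proof essentially step for step. You use the same triangle inequality through $\mu_{t,\star}^{(k)}$ and the same $\delta/2$ split: Lemma~\ref{lemma:capability_concentration} is applied once to $\hat P_t^{(a)}$, giving the $\frac{4}{\gamma^2}\epsilon_{\mathrm{batch}}^\delta$ term via Theorem~\ref{thm:empirical_ratio_bound} and $\widetilde P_t^{(j)}\le 1$, and once to $\widetilde P_t^{(j)}$, giving the $\frac{1}{\gamma}\epsilon_{\mathrm{batch}}^\delta$ term via $p_t^{(k)}\le 1$ and $p_t^{(j)}\ge\gamma$, with a union bound joining the two events.
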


\begin{proof}
Apply Lemma~\ref{lemma:capability_concentration} with failure probability
$\delta/2$ to the empirical ratio. By
Theorem~\ref{thm:empirical_ratio_bound}, the ratio error then satisfies
\begin{equation}
\label{eq:ratio_error_delta_split}
\left|
\hat\omega_t^{(k,j)}
-
\bar\omega_t^{(k,j)}
\right|
\le
\frac{4}{\gamma^2}\epsilon_{\mathrm{batch}}^\delta
\end{equation}
simultaneously for all denominator agents $j$. Applying the same batch-level
Hoeffding argument to the empirical means in the baseline gives, with failure
probability $\delta/2$,
\begin{equation}
\label{eq:batch_level_mean_concentration}
\left|
\widetilde P_t^{(j)}-p_t^{(j)}
\right|
\le
\epsilon_{\mathrm{batch}}^\delta
\end{equation}
simultaneously for all $j=1,\dots,n$. By the union bound, the two events hold
together with probability at least $1-\delta$.

On this joint event,
\begin{align}
\label{eq:combined_error_decomposition}
\left|
\hat\mu_{t,\mathrm{emp}}^{(k)}
-
p_t^{(k)}
\right|
&\le
\left|
\hat\mu_{t,\mathrm{emp}}^{(k)}
-
\mu_{t,\star}^{(k)}
\right|
+
\left|
\mu_{t,\star}^{(k)}
-
p_t^{(k)}
\right| \nonumber \\
&\le
\frac{4}{\gamma^2}\epsilon_{\mathrm{batch}}^\delta
+
\left|
\frac{1}{n}
\sum_{j=1}^{n}
\bar\omega_t^{(k,j)}
\left(
\widetilde P_t^{(j)}-p_t^{(j)}
\right)
\right| \nonumber \\
&\le
\frac{4}{\gamma^2}\epsilon_{\mathrm{batch}}^\delta
+
\frac{1}{n}
\sum_{j=1}^{n}
\frac{p_t^{(k)}}{p_t^{(j)}}
\epsilon_{\mathrm{batch}}^\delta .
\end{align}
Since rewards are bounded in $[0,1]$, $p_t^{(k)}\le 1$, and
$p_t^{(j)}\ge\gamma$ by the denominator lower-bound condition. Hence
\begin{equation}
\label{eq:batch_level_error_gamma_bound}
\frac{1}{n}
\sum_{j=1}^{n}
\frac{p_t^{(k)}}{p_t^{(j)}}
\epsilon_{\mathrm{batch}}^\delta
\le
\frac{1}{\gamma}\epsilon_{\mathrm{batch}}^\delta .
\end{equation}
Combining Eqs.~\eqref{eq:combined_error_decomposition} and
\eqref{eq:batch_level_error_gamma_bound} proves
Eq.~\eqref{eq:combined_empirical_baseline_error}.
\end{proof}

\begin{remark}[Role of the concentration bound]
\label{rem:role_of_concentration_bound}
Lemma~\ref{lemma:capability_concentration} gives finite-batch control of the
empirical capability means used to form $\hat\omega_t^{(k,j)}$ at a fixed
training step. The two terms in Eq.~\eqref{eq:batch_level_epsilon} correspond to
the two sampling sources in a batch: the $B$ prompt instances determine the
prompt-distribution term $\epsilon_{\mathrm{prompt}}(\delta)$, while the $BG$
response samples determine the response-sampling term
$\epsilon_{\mathrm{resp}}(\delta)$. Theorem~\ref{thm:empirical_ratio_bound}
propagates this mean-reward error through the capability ratio under a positive
denominator condition. Corollaries~\ref{cor:empirical_ratio_baseline_error} and
\ref{cor:combined_empirical_baseline_error} then transfer the resulting ratio
control to the empirical capability-aware baseline.
\end{remark}

\section{Formulation and Pseudocode of HACPO}
\label{Overall of HACPO}
To facilitate a precise understanding of HACPO, we present the complete algorithmic formulation and training procedure.

Taking two agents (1 and 2) as an example. The optimization objective for agent 1 consists of two terms: the loss computed from its own samples, $J_{\mathrm{homo}}(\theta)$, and the loss computed from samples of other agents, $J_{\mathrm{hete}}(\theta)$. The final loss is the sum of these two terms. Similarly, agent 2 is updated using a loss function of the same form, but with different values.

\begin{equation}
  \mathcal{J}^{(1)}_{homo}=\frac{1}{G}\sum_{i=1}^{G} \min\left(s_{t,i}^{(1,1)} \cdot A_{t,i}^{(1)}(y_{t,i}^{(1)}), \; clip(s_{t,i}^{(1,1)}, \;1-\epsilon_l, \; 1+\epsilon_h) \cdot A_{t,i}^{(1)}\right) 
\end{equation}

$\mathcal{J}^{(1)}_{homo}$ is the homo objective for Agent 1 using its own rollouts.

\begin{equation}
    \mathcal{J}^{(1)}_{hete}=\frac{1}{G}\sum_{i=1}^{G}\left[clip(s_{t,i}^{(1,2)}, \; 1.0-\delta+m\cdot\delta_{step}, \; 1.0)\cdot sg(s_{t,i}^{(1,2)})^{\alpha}\cdot\omega_t^{(2,1)}\cdot A_{t,i}^{(1)}(y_{t,i}^{(2)})\right]
\end{equation}

$\mathcal{J}^{(1)}_{hete}$ is the hete objective for Agent 1 using the rollouts from Agent 2.

\begin{equation}
    A_{t,i}^{(1)}(y_{t,i}^{(1)})=\frac{R(y_{t,i}^{(1)})-\mu_t^{(1)}}{std \{ {\mathcal{R}_{t}(x)} \}},\quad A_{t,i}^{(1)}(y_{t,i}^{(2)})=\frac{R(y_{t,i}^{(2)})-\mu_t^{(1)}}{std \{ {\mathcal{R}_{t}(x)} \}}
\end{equation}

\begin{equation}
    s_{t,i}^{(1,1)}=\left(\frac{\pi_\theta^{(1)}(y_{t,i}^{(1)})}{\pi_{\theta_{old}}^{(1)}(y_{t,i}^{(1)})}\right)^{\frac{1}{|y_{t,i}^{(1)}|}},\quad s_{t,i}^{(1,2)}=\frac{\pi_\theta^{(1)}(y_{t,i}^{(2)})^{\frac{1}{L_1}}}{\pi_{\theta_{old}}^{(2)}(y_{t,i}^{(2)})^{\frac{1}{L_2}}}
\end{equation}
Here, $L_1$ and $L_2$ respectively represent the length of response $y_{t,i}^{(2)}$ under tokenizers of agent 1 and 2.
\begin{equation}
\mathcal{J}^{(1)}=\mathcal{J}^{(1)}_{homo}+\mathcal{J}^{(1)}_{hete}  
\end{equation}

The final optimization objective is the sum of homogeneous and heterogeneous objective.

\begin{algorithm} [!h]
\caption{Heterogeneous Agent Collaborative Policy  Optimization}
\begin{algorithmic}[1]
\REQUIRE n initial policy models $\pi_{\theta_1}, \pi_{\theta_2}, ...\pi_{\theta_n}$, reward models $R$, task prompts $\mathcal{D}$, each prompt has G outputs. The training step is $t$. Each step has $M$ policy updates.
\FOR{k = 1 to n}
    \STATE rollout policy model $\pi^{(k)}_{\theta_{old}} \leftarrow \pi_{\theta_k}$
\ENDFOR
\FOR{t = 1 to $N$}
    \STATE Sample a batch $\mathcal{D}_{t}$ from $\mathcal{D}$
    \FOR{k = 1 to n}
    \STATE Update the rollout policy model $\pi^{(k)}_{\theta_\text{old}} \leftarrow \pi^{(k)}_{\theta}$
    \ENDFOR
    \FOR{k = 1 to n}
    \STATE Sample G output $y \sim \pi^{(k)}_{\theta_\text{old}}(\cdot \mid x)$ for each question $x \in \mathcal{D}_{t}$
    \STATE Compute rewards $R(y_i)$ for each output $y_i$ in the batch
    \STATE Compute accuracy for the sampling model
    \ENDFOR
    \FOR{k = 1 to n}
        \STATE Compute $A_{t,i}(y)$ for the response y in batch (agent k)
        \FOR{mini batch = 1 to $M$}
            \STATE Update the policy model $\pi^{(k)}_{\theta}$ by maximizing the HACPO objective 
        \ENDFOR
    \ENDFOR
\ENDFOR
\ENSURE $\pi^{(k)}_{\theta} | k=1,2,...,n$ 
\end{algorithmic}
\end{algorithm}
\section{Additional Experimental Results}
\label{sec: Additional Experimental Results}
\begin{table}[h!]
\centering
\caption{Seed Experiments}
\label{tab: seed experiments}
\begin{tabular}{lccccc}
\toprule
\textbf{Method} & \textbf{0} & \textbf{1} & \textbf{42} & \textbf{1337} & \textbf{3407} \\ 
\midrule
\multicolumn{6}{c}{\textbf{Qwen3-1.7B-Base}} \\ 
\midrule
HACPO & \textbf{0.656} & \textbf{0.656} & \textbf{0.664} & \textbf{0.656} & \textbf{0.644} \\
GSPO & 0.622 & 0.614 & 0.622 & 0.622 & 0.618 \\
GSPO$\times$2 & 0.624 & 0.628 & 0.624 & 0.626 & 0.62 \\ 
\midrule
\multicolumn{6}{c}{\textbf{Qwen3-4B-Base}} \\ 
\midrule
HACPO & \textbf{0.762} & \textbf{0.78} & \textbf{0.792} & \textbf{0.772} & \textbf{0.774} \\
GSPO & 0.74 & 0.748 & 0.748 & 0.75 & 0.75 \\
GSPO$\times$2 & 0.744 & 0.75 & 0.756 & 0.746 & 0.752 \\ 
\bottomrule
\end{tabular}
\end{table}
\begin{table}[h!]
\centering
\caption{Runtime Comparison}
\label{tab:runtime}
\begin{tabular}{lccc}
\toprule
\textbf{Method} & \textbf{Qwen3-1.7B-Base} & \textbf{Qwen3-4B-Base} & \textbf{Total Time} \\
\midrule
GSPO                  & 1h 31m & 2h 38m & 4h 9m  \\
GSPO updates$\times$2 & 2h 6m  & 2h 59m & 5h 5m  \\
GSPO$\times$2         & 2h 43m & 4h 1m  & 6h 44m \\
HACPO & \multicolumn{3}{c}{Overall 5h 31m} \\
\bottomrule
\end{tabular}
\end{table}
\begin{table}[h!]
\centering
\caption{GPU Memory Usage}
\label{tab:gpu_memory}
\begin{tabular}{lcc}
\toprule
\textbf{Method} & \textbf{Qwen3-1.7B-Base} & \textbf{Qwen3-4B-Base} \\
\midrule
GSPO                  & 76.8\% & 82.7\% \\
GSPO updates$\times$2 & 76.6\% & 80.3\% \\
GSPO$\times$2         & 82.2\% & 84.2\% \\
HACPO & \multicolumn{2}{c}{Overall 86.0\%} \\
\bottomrule
\end{tabular}
\end{table}
\subsection{Three More Model Combinations}
\label{sec:Three More Model Combinations}
Here, we present additional experiments in Table \ref{Additional Experimental Results}, including comparisons between Qwen3-4B-Base + Qwen3-8B-Base, Llama3.2-1B-Instruct + Llama3.2-3B-Instruct, and Qwen3-1.7B-Base + Llama3.2-1B-Instruct.

\subsection{The Performance over Different Seeds}
\label{sec: The Performance over Different Seeds}
We conduct experiments on the combination of Qwen3-1.7B-Base and Qwen3-4B-Base, evaluating performance across five different random seeds (0, 1, 42, 1337, and 3407). For these experiments, we utilize the MATH500 benchmark as our test set and set the maximum response length to 4096. The detailed results are shown in Table \ref{tab: seed experiments}.

Across all five seeds, HACPO consistently outperforms both GSPO and the resource-equivalent GSPO$\times$2 baseline by a clear margin. Specifically, on Qwen3-1.7B-Base, HACPO achieves $65.52\% \pm 0.72\%$ (vs. GSPO's $61.96\% \pm 0.36\%$ and GSPO$\times$2's $62.44\% \pm 0.30\%$). On Qwen3-4B-Base, HACPO achieves $77.60\% \pm 1.10\%$ (vs. GSPO's $74.72\% \pm 0.41\%$ and GSPO$\times$2's $74.96\% \pm 0.48\%$). The gains are stable across different seeds, confirming that HACPO's improvements are robust and statistically significant.
\subsection{The GPU Peak Memory and Overall Runtime}
\label{sec: The GPU Peak Memory and Overall Runtime}
We report the runtime (Table \ref{tab:runtime}) and peak GPU memory utilization ratio (Table \ref{tab:gpu_memory}) for the Qwen3-1.7B-Base and Qwen3-4B-Base combination. We added a baseline GSPO updates $\times$ 2, which fixed the sampling quantity and doubled the number of updates (i.e., both the sampling cost and the update cost were the same as those of HACPO).

\begin{table}[h!]
\caption{Additional Experimental Results}
\label{Additional Experimental Results}
\vskip 0.15in
\begin{center}
\scriptsize
\begin{tabular}{lccccccccr}
\toprule
Model & MATH-500 & math & gsm8k & aime2025 & AMC23 & minerva & olympiad & AVG\\
\midrule
\multicolumn{9}{c}{Qwen3-4B-Base and Qwen3-8B-Base}  \\
\midrule
4B-Base         & 0.61  & 0.676 & 0.445 & 0.1 & 0.4 & 0.308 & 0.347 & 0.412 \\
4B-Base(GRPO)     & 0.796 & 0.788 & 0.885 & \textbf{0.307} & 0.475 & 0.349 & 0.454 & 0.579 \\
4B-Base(GSPO)   & 0.782 & 0.787 & 0.877 & 0.25  & 0.525 & \textbf{0.368} & 0.46 & 0.578\\
4B-Base(GSPO$\times$2)     & 0.756 & 0.794 & 0.873 & 0.208  & 0.55 & 0.382 & 0.463  & 0.575 \\
4B-Base(Naive)  & 0.734 & 0.712 & 0.895 & 0.143 & 0.55  & 0.342 & 0.354 & 0.526 \\
\rowcolor{gray!20} 4B-Base(HACPO)    & \textbf{0.81}  & \textbf{0.803} & \textbf{0.904} & 0.275 & \textbf{0.6} & 0.364 & \textbf{0.463} & \textbf{0.603}\\
8B-Base         & 0.647 & 0.713 & 0.684 & 0.033 & 0.4 & 0.232 & 0.375 & 0.441 \\
8B-Base(GRPO)   & 0.814 & 0.812 & 0.921 & 0.265 & 0.575 & 0.415 & \textbf{0.479} & 0.612 \\
8B-Base(GSPO)   & 0.794 & 0.804 & 0.923 & 0.225 & 0.6   & \textbf{0.426} & 0.468 & 0.606\\
8B-Base(GSPO$\times$2)   & 0.8 & 0.803 & 0.92 & 0.2 & 0.575 & 0.404 & 0.46 & 0.595\\
8B-Base(Naive)  & 0.79  & 0.783 & 0.921 & 0.252 & 0.5 & 0.408 & 0.429 & 0.583\\
\rowcolor{gray!20} 8B-Base(HACPO)    & \textbf{0.828} & \textbf{0.813} & \textbf{0.933} & \textbf{0.323} & \textbf{0.625} & 0.423 & 0.467 & \textbf{0.63} \\
\midrule
\multicolumn{9}{c}{Llama3.2-1B-Instruct and Llama3.2-3B-Instruct}  \\
\midrule
Llama3.2-1B         & 0.176 & 0.297 & 0.489 & 0   & 0.15  & 0.052 & 0.061 & 0.18\\
Llama3.2-1B(GRPO)   & 0.35 & 0.349 & 0.569 & 0 & 0.125 & 0.008 & 0.097 & 0.214\\
Llama3.2-1B(GSPO)   & \textbf{0.356} & 0.346 & 0.523 & 0.021 & 0.125 & 0.066 & 0.088 & 0.218\\
Llama3.2-1B(GSPO$\times$2)   & 0.352 & 0.349 & \textbf{0.573} & 0.07 & 0.125 & 0.079 & \textbf{0.103} & 0.227\\
Llama3.2-1B(Naive)  & 0.284 & 0.302 & 0.45  & 0.0   & 0.025 & 0.066 & 0.073 & 0.171\\
\rowcolor{gray!20} Llama3.2-1B(HACPO)    & 0.35  & \textbf{0.352} & 0.541 & \textbf{0.022} & \textbf{0.2}   & \textbf{0.081} & 0.085 & \textbf{0.233}\\
Llama3.2-3B         & 0.267 & 0.441 & 0.788 & 0.0   & 0.2   & 0.169 & 0.158 & 0.289\\
Llama3.2-3B(GRPO)   & 0.502 & 0.507 & 0.814 & 0.0 & 0.25 & 0.199 & 0.174 & 0.349\\
Llama3.2-3B(GSPO)   & 0.512 & 0.501 & 0.812 & 0.054 & 0.225 & 0.184 & 0.17 & 0.351\\
Llama3.2-3B (GSPO$\times$2)   & 0.488 & 0.498 & \textbf{0.829} & 0.0 & 0.175 & 0.188 & 0.159 & 0.334 \\
Llama3.2-3B(Naive)  & 0.406 & 0.407 & 0.734 & 0.0   & 0.225 & 0.177 & 0.107 & 0.294\\
\rowcolor{gray!20} Llama3.2-3B(HACPO)    & \textbf{0.522} & \textbf{0.51}  & 0.828 & \textbf{0.067} & \textbf{0.275} & \textbf{0.199} & \textbf{0.188} & \textbf{0.37}\\
\midrule
\multicolumn{9}{c}{Qwen3-1.7B-Base and Llama3.2-1B-Instruct}  \\
\midrule
Qwen3        & 0.5   & 0.483 & 0.616 & 0.033 & 0.3   & 0.206 & 0.229 & 0.338 \\
Qwen3(GRPO)   & \textbf{0.682} & 0.652 & 0.824 & 0.16 & 0.375 & 0.272 & 0.298 & 0.466 \\
Qwen3(GSPO)     & 0.648 & 0.641 & 0.826 & 0.148 & 0.45  & 0.272 & 0.287 & 0.467\\
Qwen3(GSPO$\times$2)   & 0.664 & 0.65 & 0.829 & 0.177 & 0.375  & 0.265 & 0.293 & 0.475 \\
Qwen3(Naive)    & 0.59  & 0.596 & 0.798 & 0.105 & 0.3   & 0.221 & 0.241 & 0.407\\
\rowcolor{gray!20} Qwen3(HACPO)      & 0.676 & \textbf{0.661} & \textbf{0.838} & \textbf{0.22}  & \textbf{0.45}  & \textbf{0.305} & \textbf{0.32} & \textbf{0.496} \\
Llama3.2         & 0.176 & 0.297 & 0.489 & 0.033 & 0.15  & 0.052 & 0.061 & 0.18\\
Llama3.2(GRPO)   & 0.35 & 0.349 & \textbf{0.569} & 0 & 0.125 & 0.008 & 0.097 & 0.214\\
Llama3.2(GSPO)   & 0.356 & 0.346 & 0.523 & 0.021 & 0.125 & 0.066 & 0.088 & 0.218\\
Llama3.2(GSPO$\times$2)   & 0.352 & 0.349 & 0.573 & 0.07 & 0.125 & 0.079 & \textbf{0.103} & 0.227\\
Llama3.2(Naive)  & 0.336 & 0.337 & 0.512 & 0.0   & 0.125 & 0.066 & 0.071 & 0.214\\
\rowcolor{gray!20} Llama3.2(HACPO)    & \textbf{0.356} & \textbf{0.368} & 0.533 & \textbf{0.033} & \textbf{0.15}  & \textbf{0.066} & 0.091 & \textbf{0.228}\\
\bottomrule
\end{tabular}
\end{center}
\end{table}
\clearpage

\end{document}